%% file: main-twocol.tex
\documentclass[conference]{IEEEtran}

\usepackage[T1]{fontenc}

\usepackage{graphicx}

\newif\iftwocol
\twocoltrue

\usepackage{stfloats}

\newcommand{\titlerunning}[1]{}

\newcommand{\authorrunning}[1]{}

\newcommand{\institute}[1]{}

\newcommand{\keywords}[1]{%
    \begingroup
    \def\and{\(\cdot\)\space}%
    \par\medskip\noindent\textbf{Keywords:} #1%
    \endgroup
}

\newcommand{\inst}[1]{}

\newcommand{\orcidID}[1]{}

\usepackage{amsthm}
\theoremstyle{definition}
\newtheorem{definition}{Definition}

\theoremstyle{plain}

\newtheorem{proposition}{Proposition}

\input{sections/preamble}

\begin{document}

\title{
    Perception with Guarantees:
    Certified Pose Estimation via Reachability Analysis
}

\titlerunning{Perception with Guarantees: Certified Pose Estimation}

\author{
    \IEEEauthorblockN{
        Tobias Ladner\IEEEauthorrefmark{1}\IEEEauthorrefmark{2}\textsuperscript{1},
        Yasser Shoukry\IEEEauthorrefmark{2},
        Matthias Althoff\IEEEauthorrefmark{1}
    }
    \IEEEauthorblockA{
        \IEEEauthorrefmark{1}Technical University of Munich, Germany
    }
    \IEEEauthorblockA{
        \IEEEauthorrefmark{2}University of California, Irvine, USA\\
        Email: tobias.ladner@tum.de
    }
}

\maketitle

\footnotetext[1]{This research was conducted during Tobias Ladner's research stay at the University of California, Irvine.}

\begin{abstract}
\input{sections/abstract}

\keywords{Localization \and Autonomous systems \and Robustness \and Certification \and Formal verification \and Set-based observer.}
\end{abstract}

\input{sections/introduction}

\input{sections/background}
\input{sections/content}
\input{sections/evaluation}
\input{sections/related-work}

\input{sections/conclusion}

\newcommand{\ackname}{Acknowledgements}
\newcommand{\discintname}{Disclosure of Interests}
\input{sections/acknowledgements}


\bibliographystyle{splncs04}
\bibliography{bib}


\appendix
\crefalias{chapter}{appendix}
\crefalias{section}{appendix}
\crefalias{subsection}{appendix}

\input{sections/appendix}

\end{document}

%% file: sections/preamble.tex
\usepackage[sets,nn,operations,colors,tikz]{cora-macs}
\CORAexternalizeFiguresByDefault
\iftwocol
  \renewcommand{\includetikzinternalpath}{./figures/externalize-twocol}
  \newcommand{\fittocolwidth}[1]{\resizebox{0.9\columnwidth}{!}{$\displaystyle #1$}}
  \newcommand{\AMP}{}
  \newcommand{\TWOCOLAMP}{&}
\else
  \newcommand{\fittocolwidth}[1]{#1}
  \newcommand{\AMP}{&}
  \newcommand{\TWOCOLAMP}{}
\fi
\usetikzlibrary{positioning}
\newcommand{\inlineLegendBox}[2][20]{%
    \raisebox{0.85mm}{%
        \fcolorbox{#2}{#2!#1!white}{\rule{0pt}{0.25mm}\rule{0.35mm}{0pt}}%
    }%
}%
\newcommand{\inlineLegendCube}[2][20]{%
    \begingroup\tikzexternaldisable
    \tikz[baseline=0pt, x=1ex, y=1ex, line width=0.25pt]{%
        \filldraw[fill=#2!#1!white, draw=#2] (0,0) rectangle (1.2,1.2);%
        \filldraw[fill=#2!#1!white, draw=#2] (0,1.2) -- (0.4,1.6) -- (1.6,1.6) -- (1.2,1.2) -- cycle;%
        \filldraw[fill=#2!#1!white, draw=#2] (1.2,0) -- (1.6,0.4) -- (1.6,1.6) -- (1.2,1.2) -- cycle;%
    }%
    \endgroup
}%
\usepackage{hyperref}

\usepackage{caption}
\usepackage{wrapfig}

\usepackage{xcolor}
\usepackage{lipsum}
\setlipsum{%
    par-before = \begingroup\color{lightgray},
    par-after = \endgroup,
    sentence-before = \begingroup\color{lightgray},
    sentence-after = \endgroup
}

\usepackage{pifont}
\usepackage{cleveref}
\usepackage{amsfonts}
\usepackage{wasysym}

\crefname{section}{Sec.}{Sec.}
\crefname{subsection}{Sec.}{Sec.}
\crefname{figure}{Fig.}{Fig.}
\crefname{algorithm}{Alg.}{Alg.}
\crefname{table}{Tab.}{Tab.}
\crefname{example}{Ex.}{Ex.}
\crefname{equation}{Eq.}{Eq.}
\crefformat{equation}{Eq.~#2#1#3}
\Crefformat{equation}{Eq.~#2#1#3}
\crefrangeformat{equation}{Eqs.~#3#1#4 to~#5#2#6}
\Crefrangeformat{equation}{Eqs.~#3#1#4 to~#5#2#6}
\crefmultiformat{equation}{Eqs.~#2#1#3}{ and~#2#1#3}{, #2#1#3}{, and~#2#1#3}
\Crefmultiformat{equation}{Eqs.~#2#1#3}{ and~#2#1#3}{, #2#1#3}{, and~#2#1#3}
\crefname{definition}{Def.}{Def.}
\crefname{proposition}{Prop.}{Prop.}
\crefname{corollary}{Cor.}{Cor.}
\crefname{theorem}{Thm.}{Thm.}
\crefname{lemma}{Lemma}{Lemmas}
\crefname{appendix}{Appendix}{Appendix}

\renewcommand{\Cref}[1]{\cref{#1}}

\usepackage{booktabs} 
\usepackage{amstext} 
\usepackage{array} 
\newcolumntype{L}{>{$}l<{$}} 
\newcolumntype{R}{>{$}r<{$}} 
\newcolumntype{C}{>{$}c<{$}} 

\usepackage{xfrac}

\usepackage{algorithm}
\usepackage[noend]{algpseudocode}
\usepackage[commentColor=black, italicComments=false]{algpseudocodex}
\AtBeginEnvironment{algorithm}{\tikzexternaldisable}
\AtEndEnvironment{algorithm}{\tikzexternalenable}

\usepackage{bbold}
\newcommand{\indicator}[1]{\mathbb{1}_{\{#1\}}}

\usepackage{thm-restate}

\usepackage{orcidlink}

\usepackage{ tipa }

\newcommand{\cmatrix}[1]{\left[\begin{matrix}
#1
\end{matrix}\right]}
\newcommand{\transpose}[0]{\top}

\usepackage{amssymb}
\newcommand{\opQuadMapMat}[2]{#1\boxdot#2}

\newcommand{\varPG}[0]{P}
\providecommand{\PG}{\contSet{\varPG}}
\newcommand{\shortPG}[2][]{\shortContSet[#1]{#2}{\varPG}}
\newcommand{\shortPH}[3][]{\shortContSet[#1]{#2,#3}{H}}
\newcommand{\numPGvertices}[0]{\nu}
\newcommand{\numPGs}[0]{\rho}
\newcommand{\target}[0]{\contSet{T}}
\newcommand{\shortTarget}[2][]{\shortContSet[#1]{#2}{T}}
\newcommand{\focalLen}[0]{f}
\newcommand{\camera}[0]{C_{\focalLen,w,h}}
\newcommand{\pose}[0]{\xi}
\newcommand{\poseUncertain}[0]{\mathcal{E}}
\newcommand{\poseSpace}[0]{\Xi}
\newcommand{\angleX}[0]{\theta_x}
\newcommand{\angleY}[0]{\theta_y}
\newcommand{\angleZ}[0]{\theta_z}
\newcommand{\image}[0]{I}
\newcommand{\imageOuterApprox}[0]{\widehat{\image}}
\newcommand{\imageNoisy}[0]{\widetilde{\image}}
\newcommand{\imageNoiseLevel}[0]{\tau}
\providecommand{\B}{\mathbb{B}}
\newcommand{\opConvHull}[2][]{\operatortt[#1]{convHull}{#2}}
\providecommand{\V}{\contSet{V}}
\providecommand{\Q}{\contSet{Q}}
\newcommand{\discError}[0]{\contSet{D}}
\newcommand{\numSFs}[0]{\gamma}
\newcommand{\opSupportFunc}[3][]{\operatortt[#1]{supportFunc}{#2,\,#3}}
\newcommand{\refPoint}[0]{B}

\newcommand{\numPoseCandidates}[0]{\zeta}
\newcommand{\cZ}[3]{#1|_{#2,#3}}
\newcommand{\hypercube}[0]{\contSet{A}}

\newcommand{\errorRate}[0]{\delta}

%% file: sections/abstract.tex
Agents in cyber-physical systems are increasingly entrusted with safety-critical tasks.
Ensuring the safety of these agents often requires localizing their pose for subsequent actions.
Pose estimates can, e.g., be obtained from various combinations of lidar sensors, cameras, and external services such as GPS.
Crucially, in safety-critical domains, a rough estimate is insufficient to formally determine safety, i.e., to guarantee safety even in extreme scenarios, and external services may additionally be untrustworthy.
We address this problem by presenting an approach for certified pose estimation in 3D solely from a camera image and a well-known target geometry.
This is realized by formally bounding the pose, which is computed by leveraging recent results from reachability analysis and formal neural network verification.
Our experiments demonstrate that our approach efficiently and accurately localizes agents in both synthetic and real-world experiments.

%% file: sections/introduction.tex
\section{Introduction} \label{sec:introduction}

Agents act increasingly autonomously in real-world scenarios, including safety-critical domains such as autonomous driving~\cite{grigorescu2020survey}.
If one demands safety guarantees in such systems, the systems have to be formally verified~\cite{althoff2010reachability}.
This necessitates a correct perception of the pose of the agent with respect to surrounding obstacles so that the agent can move safely within the environment by selecting only actions that adhere to the given safety specifications~\cite{garcia2015comprehensive},
and might iteratively refine its understanding as it moves through the environment~\cite{clarke2000counterexample}.

Unfortunately, existing localization techniques~\cite{cadena2017past,placed2023survey} -- e.g., based on cameras and lidar sensors -- only provide a rough estimate of the current pose,
which is insufficient for downstream tasks if formal safety is required.
In addition, localization is often performed using deep learning~\cite{chen2023deep}, which is susceptible to adversarial attacks~\cite{goodfellow2015explaining}.
This can lead to wrong predictions for noisy images and under poor weather conditions;
however, such conditions need to be considered in real-world environments.
Moreover, external services, such as GPS, can be interfered with -- e.g., in geopolitically tense scenarios~\cite{pirayesh2022jamming,wu2024gps} -- making them untrustworthy.

We address this problem by presenting an approach to obtain certified pose estimates solely from images taken by an event-based camera~\cite{gallego2020event} and a well-known target geometry.
Such a target geometry can be, e.g., the geometry of a stop sign or standardized runway markings on the ground (\cref{fig:motivation-planar-target}).
Knowing the geometry, we aim to robustly detect the target in an image and return a certified pose estimation of the camera in relation to this target.
We show that such a pure vision-based setting is already sufficient to obtain certified pose estimates.
Our approach can also handle noisy images, typically obtaining a more conservative certified pose estimate depending on the amount of noise,
making it also applicable in real-world settings.

\begin{figure*}
    \centering
    \begin{minipage}[c]{\iftwocol 0.8\linewidth \else \linewidth \fi}
        \centering
        \small (a) \hspace{0.45\linewidth} (b) \\
        \begin{minipage}[c]{0.46\linewidth}
          \centering%
          \includegraphics[width=0.9\linewidth, alt={Runway markings digits 1-10 as by ICAO standard}]{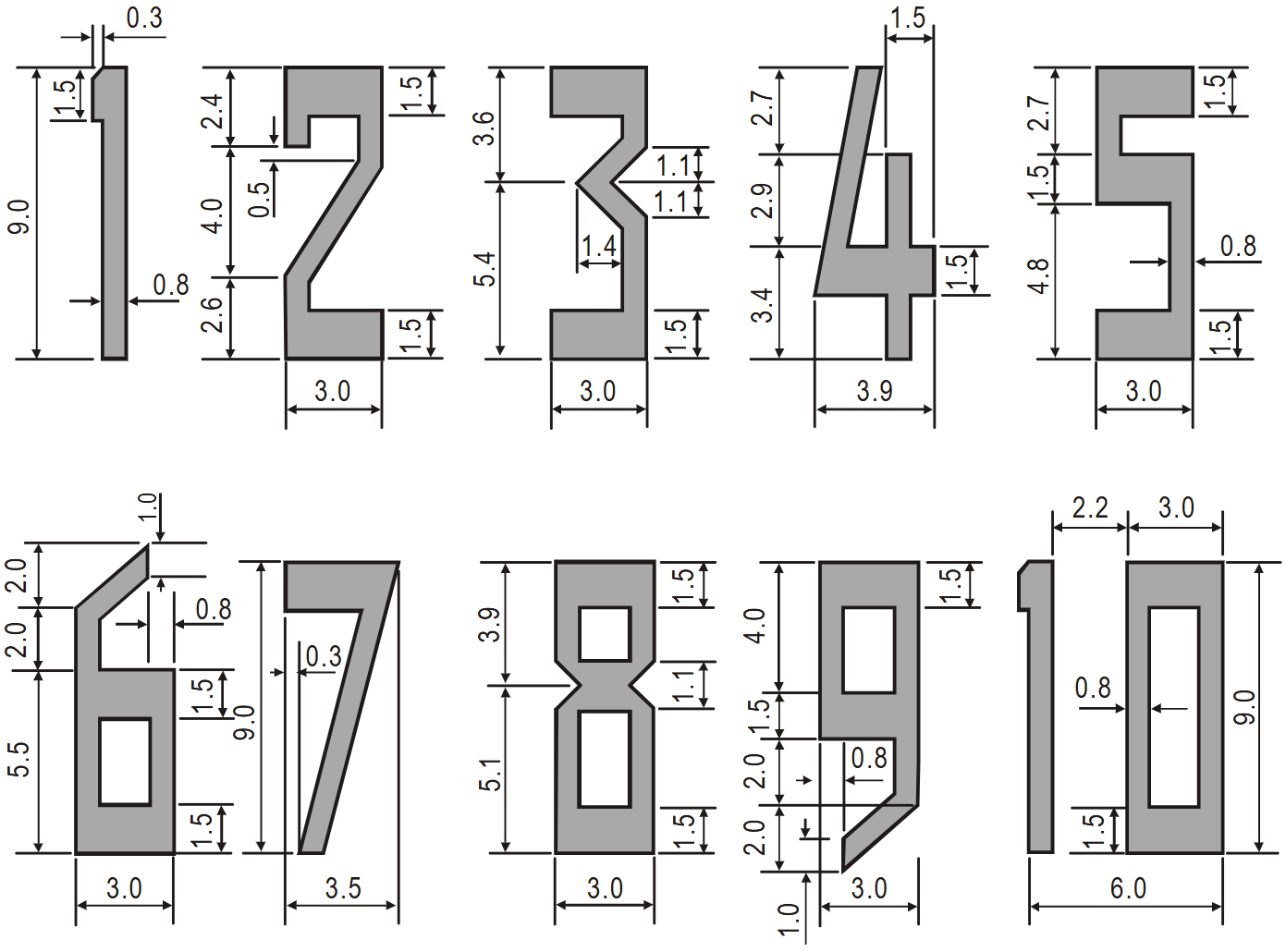}%
        \end{minipage}%
        \begin{minipage}[c]{0.07125\linewidth}
          \centering%
          \includegraphics[width=0.9\linewidth, alt={Runway markings runway as by ICAO standard}]{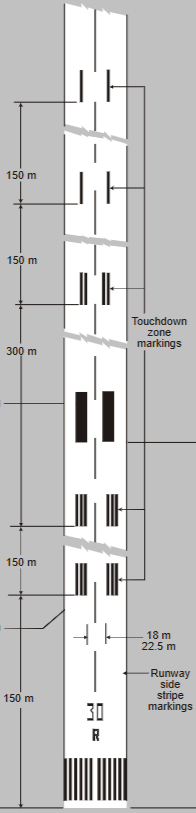}%
        \end{minipage}
        \begin{minipage}[c]{0.46\linewidth}
          \centering%
          \includegraphics[width=0.9\linewidth, alt={Picture of runway with markings visible}]{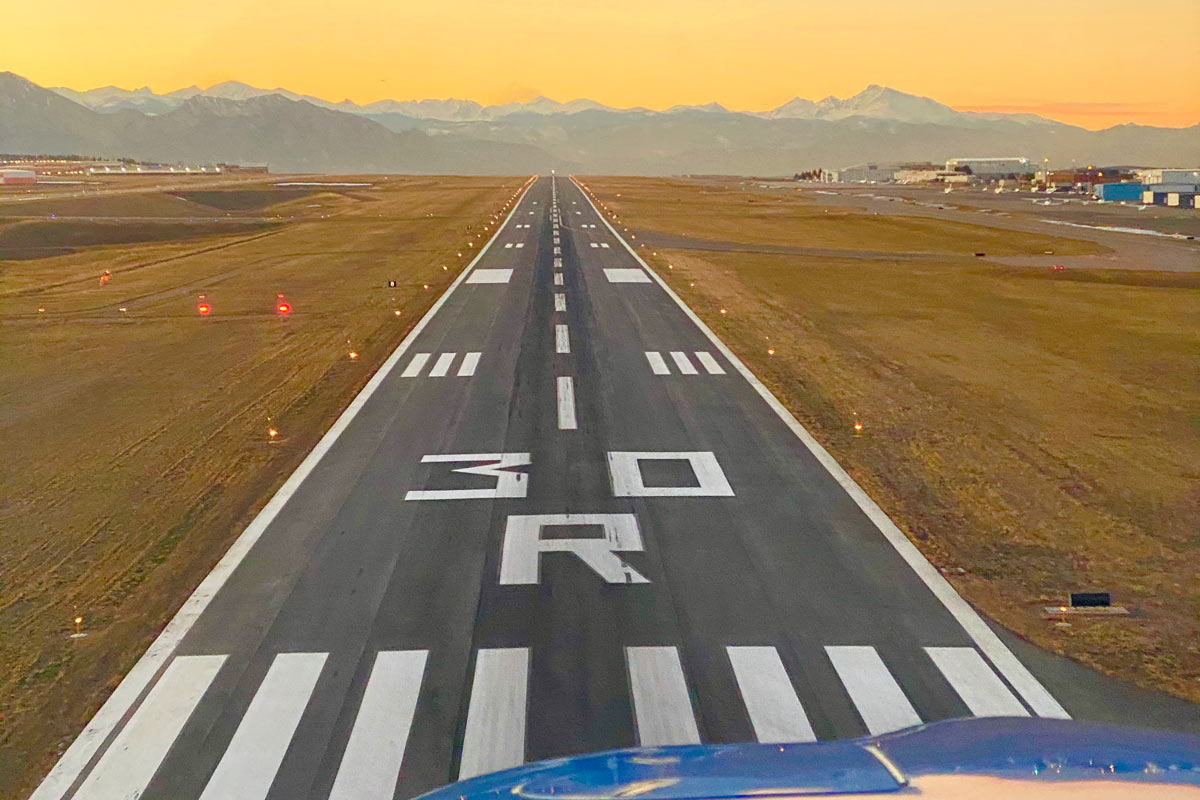}%
        \end{minipage}
    \end{minipage}
    \caption{
    (a) Runway markings as by the ICAO standard~\cite[Chp.~5]{icao2018Aerodrome}.
    (b) Runway markings observed from a landing plane~\cite{runwaywebsite}.}
    \label{fig:motivation-planar-target}
\end{figure*}

In summary, our work marks an important step toward the deployment of autonomous agents in safety-critical domains, and our main contributions are:
\begin{enumerate}
    \item We enclose possible images of a target from an uncertain pose using reachability analysis.
    \item We show how a certified pose estimate given an image can be retrieved, thereby guaranteeing a correct perception.
    \item Our approach naturally incorporates uncertainty in poses, camera parameters, and target geometry.
    \item We obtain tight certified pose estimates on synthetic experiments as well as real-world experiments on noisy images.
\end{enumerate}

%% file: sections/background.tex
\section{Background} \label{sec:background}

\subsection{Notation}\label{sec:notation}
Let $\N$ be the natural numbers, $\R$ be the real numbers, and $\B=\{0,1\}$.
We denote scalars and vectors by lowercase letters, matrices by uppercase letters, and sets by calligraphic letters.
The $i$-th element of a vector $v\in\R^n$ is written as $v_{(i)}$,
the element in the $i$-th row and $j$-th column of a matrix $A\in\R^{n\times m}$ is written as $A_{(i,j)}$,
and the entire $i$-th row and $j$-th column are written as $A_{(i,\cdot)}$ and $A_{(\cdot,j)}$, respectively.
We denote by $I_n$ the identity matrix of dimension $n\in\mathbb{N}$.
The bold symbols $\mathbf{0}$ and $\mathbf{1}$ refer to matrices with all zeros and ones of proper dimensions, respectively.
Given $n\in\mathbb{N}$, we use the shorthand notation $[n]=\left\{ 1,\ldots,n \right\}$.
The cardinality of a discrete set $\mathcal{D}$ is denoted by $|\mathcal{D}|$.
Let $\mathcal{D}\subseteq [n]$, then $A_{(\mathcal{D}, \cdot)}$ denotes all rows $i\in\mathcal{D}$ in lexicographic order;
this is used analogously for columns.
Given $a\leq b \leq n$, we also use $A_{(a:b,\cdot)}= A_{([b]\setminus[a-1],\cdot)}$ for readability when indexing.
Let $\mathcal{S}\subset\R^n$ be a set
and $f\colon \R^n\rightarrow \R^m$ be a function,
then $f(\mathcal{S}) = \left\{f(x)\ \middle|\ x \in \mathcal{S}\right\}$.
We also use the (matrix) indexing defined above for multi-dimensional sets.
An interval with bounds $a, b\in\R^n$ is denoted by $[a,b]$, where $a\leq b$ holds element-wise.
We denote the indicator function by $\indicator{\texttt{<condition>}}$.

\subsection{Model of the System}
\label{sec:system-model}

Our goal is to obtain a certified pose estimate of a target object from a binary image $\image\in\B^{w\times h}$ taken by an event-based camera, which is known for its sensitivity~\cite{gallego2020event}.
We assume that the target object can be decomposed into multiple convex polygons (\cref{fig:motivation-planar-target}),
and subsequently define a generalized form of polygons~\cite{de2008computational},
where their vertices are on a two-dimensional plane in a three-dimensional Euclidean space.
\begin{definition}[Polygon]
    \label{def:polygon}
    Given $c\in\R^{3}$, $d\in\R$ defining a plane,
    and $\numPGvertices$ vertices $V\in\R^{3\times \numPGvertices}$ in counter-clockwise winding order,
    we define a convex polygon as:
    \begin{align*}
        \PG = \shortPG{V} &= \opConvHull[inline]{V} \subset\R^3, \iftwocol \\ \else & \fi
        \text{with }c^\transpose V_{(\cdot,i)} &= d, \quad i\in[\numPGvertices],
    \end{align*}
    where \iftwocol \\ \else \fi $\opConvHull{V} = \left\{ \sum_{i=1}^{\numPGvertices} \lambda_i V_{(\cdot,i)} \ \middle|\ \lambda_i \in [0,1],\, \sum_{i=1}^{\numPGvertices} \lambda_i = 1 \right\}$.
\end{definition}
The target object is defined as follows:
\begin{definition}[Target Object]
    \label{def:target}
    Given $\numPGs$ convex polygons $\PG_1,\ldots,\PG_{\numPGs} \subset \R^3$,
    the target model $\target = \shortTarget{\PG_1,\ldots,\PG_\numPGs} \subset \R^3$ is:
    \begin{equation*}
        \target = \bigcup_{i=1}^\numPGs \PG_i.
    \end{equation*}
\end{definition}

The polygons of the target $\target$ are given in the \emph{target coordinate frame} (TCF).
This space is transformed through the camera (\emph{camera coordinate frame}, CCF) onto the final image (\emph{pixel coordinate frame}, PCF),
where we consider a pinhole camera model specified by intrinsic and extrinsic parameters~\cite[Chp.~6.1]{hartley2003multiple}.

The \emph{intrinsic parameters} consist of the focal length $\focalLen\in\R_{>0}$ as well as the width $w\in\N$ and height $h\in\N$ of the final image.
We assume that the principal point of the camera,
i.e., where the optical axis of the camera intersects the image sensor plane,
is at the center of the image.
The \emph{extrinsic parameters} involve the pose $\pose$ of the camera with respect to the target $\target$:
\begin{equation}
    \label{eq:pose}
    \pose = \cmatrix{x & y & z & \angleX & \angleY & \angleZ}^\transpose \in \poseSpace \subset \R^6,
\end{equation}
where $x,y,z$ is the position in the global coordinate frame,
$\angleX,\angleY,\angleZ$ are the roll, pitch, and yaw angles, respectively,
and $\poseSpace\subset\R^6$ is the pose space.

Subsequently, we detail how an image is computed from a pose $\pose$,
which we also illustrate in \cref{fig:system-model}:
Given a convex polygon $\PG_i=\shortPG{V_i^\text{TCF}}$ of a target $\target$, with $V_i^\text{TCF}\in\R^{3\times \numPGvertices_i}$,
and the parameters of the camera $\focalLen,w,h,\pose$,
the \emph{analog} output of the camera is~\cite[Chp.~6.1]{hartley2003multiple}:
\begin{equation}
    \label{eq:camera-analog}
    V_i^\text{CCF} = K_{\focalLen,w,h} \, \left( R(\angleX,\angleY,\angleZ) \, V_i^\text{TCF} + T(x,y,z) \right) \iftwocol \else \in\R^{3\times \numPGvertices_i} \fi,
\end{equation}
using the intrinsic matrix~\cite[Eq.~6.2]{hartley2003multiple}:
\begin{equation}
    \label{eq:intrinsic-params}
    K_{\focalLen,w,h} = \cmatrix{\focalLen & 0 & w/2 \\ 0 & \focalLen & h/2 \\ 0 & 0 & 1},
\end{equation}
the rotation matrix~\cite[Eq.~4.17]{goldstein1980classical}:
\begin{align}
    \label{eq:rot-mats}
    \begin{split}
        &R(\angleX,\angleY,\angleZ) = R_x(\angleX)\, R_y(\angleY)\, R_z(\angleZ) = \\
        &\fittocolwidth{
        \cmatrix{1 & 0 & 0 \\ 0 & \cos(\angleX) & -\sin(\angleX) \\ 0 & \sin(\angleX) & \cos(\angleX)}
        \cmatrix{ \cos(\angleY) & 0 & \sin(\angleY) \\ 0 & 1 & 0 \\   -\sin(\angleY) & 0 & \cos(\angleY)}
        \cmatrix{ \cos(\angleZ) & -\sin(\angleZ) & 0 \\ \sin(\angleZ) & \cos(\angleZ) & 0 \\ 0 & 0 & 1}
        },
    \end{split}
\end{align}
and the translation:
\begin{equation}
    T(x,y,z) = \cmatrix{x & y & z}^{\transpose}.
\end{equation}
The \emph{digital} behavior of the camera maps the polygon in the CCF onto the pixels of the binary image $\image_i\in\B^{w\times h}$.
This is computed by~\cite[Eq.~6.1]{hartley2003multiple}:
\renewcommand{\arraystretch}{1.5}
\begin{align}
    \begin{split}
        \label{eq:camera-digital}
        \forall (q_x,q_y) \in \TWOCOLAMP\  [w]\times[h]\colon\iftwocol \\ \else \quad \fi  \image_{i(q_x,q_y)} = &\begin{cases}
                    1 & \text{if $(q_x,q_y)\in\opConvHull[inline]{V_{i}^\text{PCF}}$,} \\
                    0 & \text{otherwise,}
        \end{cases} \\
    \text{where} \iftwocol \ \else \qquad \fi
        V_{i(\cdot,k)}^\text{PCF} = &\cmatrix{\sfrac{V_{i(1,k)}^\text{CCF}}{V_{i(3,k)}^\text{CCF}} \\ \sfrac{V_{i(2,k)}^\text{CCF}}{V_{i(3,k)}^\text{CCF}}} \in \R^{2}, \quad k\in [\numPGvertices_i].
    \end{split}
\end{align}

\begin{figure*}[t]
    \centering
    \includegraphics[alt={Visualization of a runway target transformation through all camera frames on the final image pixels}]{./figures/externalize\iftwocol -twocol\else \fi/figures/camera/pose-camera-model}
    \caption{
        Example transformation given a target $\target$ and a pose $\pose$.
        The goal of this work is the inverse:
        How to obtain a certified estimate of $\pose$ given the final image and $\target^\text{TCF}$?
    }
    \label{fig:system-model}
\end{figure*}

We now have all the ingredients to define the camera model used in this work.
\renewcommand{\arraystretch}{1}
\begin{definition}[Pinhole Camera Model~{\cite[Chp.~6.1]{hartley2003multiple}}]
    \label{def:camera}
    Given a target $\target = \{\PG_1,\ldots,\PG_\numPGs\}$, the pose $\pose\in\poseSpace$, and the parameters of the camera $\focalLen\in\R$, $w\in\N$, $h\in\N$,
    the images $\image_i\in\B^{w\times h}$ of each polygon $i\in[\numPGs]$, captured by a camera $\camera$, are computed using \crefrange{eq:camera-analog}{eq:camera-digital},
    and the combined image $\image\in\B^{w\times h}$ is given by:
    \begin{equation*}
        \image = \min\{1,\, \Sigma_{i=1}^\numPGs \image_i \}.
    \end{equation*}
\end{definition}
Thus, a pixel is turned on if any polygon $\PG_i$ intersects with the pixel.

\subsubsection*{Noisy images\iftwocol\else.\fi}

In practice, images are subject to noise for which we define a noise budget.
\begin{definition}[Noise Budget]
    \label{def:noisy-image}
    Given a clean image $I\in\B^{w\times h}$ as produced by \cref{def:camera},
    we say that an image $\imageNoisy\in\B^{w\times h}$ does not exceed
    a maximal noise budget $\imageNoiseLevel\in\N_0$ iff the number of perturbed pixels is bounded by:
    \begin{equation*}
        \lVert \image - \imageNoisy \rVert_1 \leq \imageNoiseLevel.
    \end{equation*}
\end{definition}
We assume that noise does not lead to turned-off pixels along the edges of the target,
as event-based cameras can reliably detect edges~\cite{gallego2020event}.

\subsection{Set-Based Computing}

We use (matrix) polynomial zonotopes~\cite{kochdumper2020sparse} to represent sensing uncertainties as their Minkowski sum and multiplication can be computed exactly and efficiently.
The definition and the required operations are provided together with an example in \cref{sec:pZ}.

\begin{definition}[Matrix Polynomial Zonotope~{\cite[Def.~7]{ladner2025formal}}]
    \label[definition]{def:pZ-mat}
    Given an offset $O\in\R^{n\times m}$,
    dependent generators $G \in \R^{n\times m \times h}$,
    independent generators $G_I \in\R^{n\times m\times q}$,
    and an exponent matrix $E\in \mathbb{N}_0^{p\times h}$ with an identifier $\PZid\in\mathbb{N}^p$,
    a matrix polynomial zonotope $\mathcal{M} = \shortPZ{O}{G}{G_I}{E}\subset\R^{n\times m}$ is defined as:
    \begin{align*}
        \mathcal{M} \coloneqq \left\{ O + \sum_{i=1}^h \left(\prod_{k=1}^p \alpha_k^{E_{(k,i)}}\right) G_{(\cdot,\cdot, i)} + \sum_{j=1}^{q} \beta_j G_{I(\cdot,\cdot, j)}\ \iftwocol \right| \\ \left. \else \middle|\ \fi \alpha_k, \beta_j \in \left[-1,1\right] \right\}.
    \end{align*}
\end{definition}

Given two matrix polynomial zonotopes $\mathcal{M}_1=\shortPZ{O_1}{G_1}{G_{I,1}}{E_1},\,\mathcal{M}_2=\shortPZ{O_2}{G_2}{G_{I,2}}{E_2}\subset\R^{n\times m}$ with a common identifier,
the Minkowski sum is computed as~\cite[Eq.~5]{ladner2025formal}:
\begin{align}
    \label{eq:PZ-mink-sum}
    \begin{split}
        \mathcal{M}_1\oplus\mathcal{M}_2 \AMP= \left\{ M_1 + M_2\ \middle|\ M_1\in\mathcal{M}_1,\,M_2\in\mathcal{M}_2 \right\} \\
        \AMP=\shortPZ{O_1 + O_2}{\cmatrix{G_1\ G_2}}{\cmatrix{G_{I,1}\ G_{I,2}}}{\cmatrix{E_1 & E_2 }} \iftwocol \else \subset\R^{n\times m} \fi.
    \end{split}
\end{align}
Given two matrix polynomial zonotopes $\mathcal{M}_3\subset\R^{n\times k}$, $\mathcal{M}_4\subset\R^{k\times m}$,
their multiplication:
\begin{align}
    \label{eq:PZ-mat-mul}
    \opQuadMapMat{\mathcal{M}_3}{\mathcal{M}_4} = \left\{ (M_3 M_4)\ \middle|\ M_3\in\mathcal{M}_3,\, M_4\in\mathcal{M}_4 \right\}
\end{align}
is computed by broadcasting the multiplication onto each offset and generator combination~\cite[Lemma~1]{ladner2025formal}.
Notably, an affine map is computed as a special case of \cref{eq:PZ-mink-sum} and \cref{eq:PZ-mat-mul},
with one matrix polynomial zonotope being a singleton.
Subscript operations on a matrix polynomial zonotope $\PZ=\shortPZ{O}{G}{G_I}{E}\subset\R^{n\times m}$
can be computed by applying them to $O$, $G$, $G_I$.
For example, the uncertainty at the $(i,j)$-th entry of the matrix, $i\in[n],j\in[m]$, is obtained by:
\begin{equation}
    \PZ_{(i,j)} = \shortPZ{O_{(i,j)}}{G_{(i,j,\cdot)}}{G_{I(i,j,\cdot)}}{E}.
\end{equation}
We use matrix polynomial zonotopes to bound the uncertain vertices of a polygon $\V\subset\R^{3\times\numPGvertices}$,
and for $k\in[\numPGvertices]$, $\V_{(\cdot,k)}\subset\R^3$ returns the uncertainty of the $k$-th vertex.

Given a set $\mathcal{X}\subset\R^n$ and a function $f\colon\R^n\rightarrow\R^m$,
we also need to bound $f(\mathcal{X}) \subseteq \mathcal{Y} \subset\R^m$.
To do so, we leverage recent advances in formal neural network verification~\cite{kaulen2025vnn},
in particular, for polynomial zonotopes~\cite{kochdumper2022open,ladner2023automatic}.
Verifying certain properties of the output of $f$ given $\mathcal{X}$ is NP-hard~\cite{katz2017reluplex}.
Thus, the output is usually enclosed using abstractions,
for which polynomial zonotopes have a good trade-off between precision and computation time.
\begin{proposition}[Image Enclosure~{\cite[Sec. 3]{kochdumper2022open}}]
    \label[proposition]{prop:image-enclosure}
    Given a set $\mathcal{X}\subset\R^n$ and a function $f\colon\R^n\rightarrow\R^m$,
    \begin{equation*}
        \mathcal{Y} = \opEnclose{f}{\mathcal{X}} \supseteq f(\mathcal{X})
    \end{equation*}
    computes an outer-approximative output set.
    We provide details on $\opEnclose{\cdot}{\cdot}$ in \cref{sec:pZ}.
\end{proposition}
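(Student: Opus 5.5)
The plan is to prove \cref{prop:image-enclosure} constructively. I would define $\opEnclose{f}{\mathcal{X}}$ recursively over a decomposition of $f$ into elementary operations and show that every step is sound, i.e., that it returns a superset of the exact image. Concretely, I assume $f$ can be written as a composition $f = f_L\circ\cdots\circ f_1$. Each $f_\ell$ is one of the following: an affine map, a matrix multiplication of two set-valued arguments as in \cref{eq:PZ-mat-mul}, a Minkowski sum as in \cref{eq:PZ-mink-sum}, or an element-wise nonlinear scalar function such as $\sin$, $\cos$, or division. This covers \crefrange{eq:camera-analog}{eq:camera-digital}. I then set $\mathcal{X}_0=\mathcal{X}$ and $\mathcal{X}_\ell=\opEnclose{f_\ell}{\mathcal{X}_{\ell-1}}$, and return $\mathcal{Y}=\mathcal{X}_L$.

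Soundness of the composition follows by induction on $\ell$ from two facts: monotonicity of set images, and soundness of each step. If $f_{\ell-1}\circ\cdots\circ f_1(\mathcal{X})\subseteq\mathcal{X}_{\ell-1}$, then
\begin{equation*}
f_\ell\circ\cdots\circ f_1(\mathcal{X}) = f_\ell\bigl(f_{\ell-1}\circ\cdots\circ f_1(\mathcal{X})\bigr)\subseteq f_\ell(\mathcal{X}_{\ell-1})\subseteq\mathcal{X}_\ell .
\end{equation*}
For affine maps, Minkowski sums, and matrix products, the step is exact by \cref{eq:PZ-mink-sum}, \cref{eq:PZ-mat-mul}, and the broadcasting lemma, so the inclusion is actually an equality. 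The identifier $\PZid$ keeps the dependencies between factors consistent across steps.

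The main obstacle is the element-wise nonlinear step. For a scalar function $g$ applied to a polynomial zonotope entry $\PZ_{(i)}$, I would proceed in three steps.

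First, I compute bounds $[l,u]\supseteq\PZ_{(i)}$ by interval enclosure of the polynomial.

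Second, I fit a polynomial $p$ on $[l,u]$, for example by Taylor expansion or regression, and bound the approximation error $d=\max_{x\in[l,u]}|g(x)-p(x)|$. For $\sin$, $\cos$, and $1/x$ with $0\notin[l,u]$, this bound can be obtained from the Lagrange remainder or from an analytic extremum search over the critical points of $g-p$.

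Third, I evaluate $p$ on the polynomial zonotope exactly, using repeated multiplication and \cref{eq:PZ-mink-sum}, and add $d$ as a new independent generator. Every $x\in\PZ_{(i)}$ then satisfies $g(x)=p(x)+e$ with $|e|\le d$, so $g(x)$ lies in the result.

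The delicate point is to make $d$ rigorous, in particular for division near zero. Here I would explicitly require $0\notin[l,u]$, which holds when the target is in front of the camera ($V^\text{CCF}_{(3,\cdot)}>0$), and otherwise report failure. Finally, I would use the fact that a Minkowski sum with an independent term preserves containment to conclude $f(\mathcal{X})\subseteq\mathcal{Y}$.
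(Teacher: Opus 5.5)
Your proposal is correct. Its core is the element-wise nonlinear step: bound the domain, fit $p$ by regression, bound the error $d$, evaluate $p$ via \cref{eq:PZ-mat-mul} and \cref{eq:PZ-mink-sum}, and append $d$ as a new independent generator. This is exactly the construction the paper describes in \cref{sec:pZ}, following the cited reference, and that description is all the justification the paper gives for this proposition.

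The composition-by-induction layer you add is what the paper argues more tersely in the proof of \cref{prop:algForward}. Your explicit requirement $0\notin[l,u]$ for $x\mapsto 1/x$ makes precise a hypothesis the paper leaves implicit. Note that only $\subseteq$ is needed, so you need not claim equality for the bilinear steps; that claim is delicate once independent generators are multiplied.
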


\subsection{Problem Statement}
\label{sec:problem-statement}

Given a target $\target\subset\R^3$ (\cref{def:target}), a camera $\camera$ (\cref{def:camera}), and a binary image $\image^*\in\B^{w\times h}$,
we want to find a certified set of poses $\poseUncertain^*\subseteq\poseSpace$
such that the (unobservable) true pose $\pose^*\in\poseSpace$,
which reproduces $\image^*$ up to a considered noise budget $\imageNoiseLevel$,
is contained: $\pose^*\in\poseUncertain^*$.
To penalize simply returning the trivial solution $\poseUncertain^*=\poseSpace$,
pose estimates $\poseUncertain^*$ with a smaller volume are preferred.

%% file: sections/content.tex
\section{Framework for Certified Pose Estimation} \label{sec:content}

Our approach to obtain a certified pose estimate is based on reachability analysis and preimage enclosures using the computed reachable sets.
In particular, we compute outer-approximative images from a set of pose candidates in \cref{sec:pose-to-image},
on which we later, given a concrete image, impose constraints to obtain a tight certified pose estimate in \cref{sec:image-to-pose}.
An overview of this process is shown in \cref{fig:overview}.

\begin{figure*}
    \centering
    \includegraphics[width=\linewidth,alt={Interaction of components of our approach.}]{./figures/externalize\iftwocol -twocol\else \fi/figures/overview/overview}
    \caption{
        Overview: For each pose candidate $\poseUncertain_c$ (\inlineLegendCube[0]{black}), we pre-compute the possible positions of a target $\target$ in the camera image to obtain an outer-approximative image $\imageOuterApprox_{\poseUncertain_c}$ (\inlineLegendBox{black}).
        Subsequently, we check online whether (a) a given concrete image $\image^*$ (\inlineLegendBox{black}) is contained in $\imageOuterApprox_{\poseUncertain_c}$.
        (b) If not, $\poseUncertain_c$ is discarded since $\pose^*\not\in\poseUncertain_c$. Otherwise, we exploit the relative position of the vertices in $\image^*$ (\inlineLegendBox{CORAcolorYellow}) within the uncertain vertices (\inlineLegendBox{CORAcolorLightBlue}) of $\poseUncertain_c$
        -- which are a byproduct of computing $\imageOuterApprox_{\poseUncertain_c}$ --
        to (c) impose constraints (\inlineLegendBox{CORAcolorGreen}) on $\poseUncertain_c$, which -- when computed for all vertices -- obtains the certified pose estimate $\poseUncertain^*$ (\inlineLegendBox{CORAcolorRed}).
    }
    \label{fig:overview}
\end{figure*}

\subsection{Sound Image Enclosure From Uncertain Pose}
\label{sec:pose-to-image}

Given an uncertain pose $\poseUncertain\subseteq\poseSpace$,
we present an approach to enclose all resulting images from concrete poses $\pose\in\poseUncertain$.
This section closely follows \cref{sec:system-model}, while computing everything in a set-based manner in \cref{alg:pose-to-image}.

\begin{algorithm}[h!]
    \caption{Enclosing an Image From Uncertain Pose}
    \label{alg:pose-to-image}
    \begin{algorithmic}[1]
        \Require {Target $\target=\shortPG{\PG_1,\ldots,\PG_\numPGs}\in\R^3$, uncertain pose $\poseUncertain\subseteq\poseSpace$, camera $\camera$}
        \State \texttt{// Analog behavior of camera.} \label{alg-line:analog-start}
        \State $\contSet{S} \gets \opEnclose[inline]{\sin}{\poseUncertain_{(4:6)}}$ \label{alg-line:sin-cos} \Comment{\cref{prop:image-enclosure}}
        \State $\contSet{C} \gets \opEnclose[inline]{\cos}{\poseUncertain_{(4:6)}}$
        \State Construct $\contSet{R}_x,\contSet{R}_y,\contSet{R}_z$ using $\contSet{S},\contSet{C}$ \label{alg-line:rot-mats} \Comment{\cref{eq:rot-mats}}
        \State $\contSet{R} \gets \opQuadMapMat{\contSet{R}_x}{\opQuadMapMat{\contSet{R}_y}{\contSet{R}_z}}$ \Comment{\cref{eq:rot-mats}, \cref{eq:PZ-mat-mul}}
        \For{$\PG_i =\shortPG{V_i^\text{TCF}}$ \textbf{in} $\target$}
            \State $\V_i^\text{CCF} \gets \opQuadMapMat{K_{\focalLen,w,h}}{(\opQuadMapMat{\contSet{R}}{V_i^\text{TCF}} \oplus \poseUncertain_{(1:3)})}$  \label{alg-line:analog-end} \Comment{\cref{eq:camera-analog-uncertain}}
            \State \texttt{// Digital behavior of camera.}
            \State $\V_{i(1:2,\cdot)}^\text{PCF} \gets \opQuadMapMat{\V_{i(1:2,\cdot)}^\text{CCF}}{\opEnclose[inline]{x \mapsto \sfrac{1}{x}}{\V_{i(3,\cdot)}^\text{CCF}}}$ \Comment{\cref{eq:camera-digital}\iftwocol \else, \cref{prop:image-enclosure}\fi}
            \State $\widehat{\PG}_i^\text{PCF} \supseteq \opConvHull{\V_{i(\cdot,1)}^\text{PCF},\ldots,\V_{i(\cdot,\numPGvertices_i)}^\text{PCF}}$ \label{alg-line:convHull} \Comment{\cref{eq:convHull}}
            \State $\imageOuterApprox_i \gets \mathbf{0}\in\B^{w \times h}$ \label{alg-line:pixel-contains-start} \Comment{Initialize image\ }
            \For{$(q_x,q_y) \in [w]\times[h]$}
                \State $\imageOuterApprox_{i(q_x,q_y)} \gets \indicator{(q_x,q_y)\in\widehat{\PG}_i^\text{PCF}}$ \label{alg-line:pixel-inclusion}\Comment{Check \iftwocol \else pixel \fi containment}
            \EndFor  \label{alg-line:pixel-contains-end}
        \EndFor
        \State \Return \iftwocol \else  Outer-approx.\ image \fi $\imageOuterApprox_\poseUncertain \gets \min\{1,\, \sum_{i=1}^\numPGs \imageOuterApprox_i \}$ \label{alg-line:combine-polygon-images} \Comment{\cref{def:camera}}
    \end{algorithmic}
\end{algorithm}

We first enclose the $\sin$ and $\cos$ functions for the given angles $\poseUncertain_{(4:6)}$ using \cref{prop:image-enclosure} (line~\ref{alg-line:sin-cos}),
and bound all possible rotation matrices $\contSet{R}$ (\cref{eq:rot-mats}) using \cref{eq:PZ-mat-mul} (line~\ref{alg-line:rot-mats}).
Subsequently, the analog output of the camera (\cref{eq:camera-analog}) is computed by:
\begin{equation}
    \label{eq:camera-analog-uncertain}
    \V_i^\text{CCF} = \opQuadMapMat{K_{\focalLen,w,h}}{(\opQuadMapMat{\contSet{R}}{V_i^\text{TCF}} \oplus \poseUncertain_{(1:3)})}
\end{equation}
using \cref{eq:PZ-mat-mul} and \cref{eq:PZ-mink-sum}.
Please note that uncertainty in both the camera parameters and the target geometry can be modeled by adding uncertainty to $K_{\focalLen,w,h}$ and $V^\text{TCF}$, respectively;
further details are provided in \cref{sec:uncertain-target-geometry}.

\begin{figure*}[t]
    \centering
    \includegraphics[alt={Step-by-step construction of an outer-approximative image}]{./figures/externalize\iftwocol -twocol\else \fi/figures/supportFunc/pose-convHull}
    \caption{
        Computation of the outer-approximative image for a target with a single polygon ($\numPGs=1$) and an uncertain pose $\poseUncertain$ with perturbed angles $\angleX,\angleY,\angleZ$ by $10$ degrees:
        (a) Uncertain vertices $\V_{1}^\text{PCF}$ enclose the vertices of the random samples.
        (b) Convex hull computation over $\V_{1}^\text{PCF}$.
        (c) Outer-approximative image $\imageOuterApprox_\poseUncertain$.
    }
    \label{fig:convHull}
\end{figure*}

The digital behavior of the camera (\cref{eq:camera-digital}) requires us to compute the convex hull over all vertices $\V_{i(\cdot,k)}^\text{PCF}$ (\cref{def:polygon});
however, these are now sets themselves due to the uncertainty in the pose $\poseUncertain$ (\cref{fig:convHull}).
As $\V_{i(\cdot,k)}^\text{PCF}$ is two-dimensional, efficient algorithms exist to compute this convex hull~\cite{bronnimann2006space}:
\begin{equation}
    \label{eq:convHull}
    \widehat{\PG}_i^\text{PCF} \supseteq \opConvHull{\V_{i(\cdot,1)}^\text{PCF},\ldots,\V_{i(\cdot,\numPGvertices_i)}^\text{PCF}},
\end{equation}
and more details are provided in \cref{sec:convHull}.
Finally, we check for pixel containment in the outer-approximative polygon $\widehat{\PG}_i^\text{PCF}$ of each polygon $i\in[\numPGs]$ (lines~\ref{alg-line:pixel-contains-start}-\ref{alg-line:pixel-contains-end}),
and combine the resulting images to obtain the final image (line~\ref{alg-line:combine-polygon-images}).
This final image is used later as an initial filter of the pose candidates.

\begin{restatable}[Pose to Image]{proposition}{propalgForward}
    \label{prop:algForward}
    Given a target $\target=\shortPG{\PG_1,\ldots,\PG_\numPGs}\in\R^3$, an uncertain pose $\poseUncertain\subseteq\poseSpace$, and a camera $\camera$,
    \cref{alg:pose-to-image} computes an image $\imageOuterApprox_\poseUncertain\in\B^{w\times h}$ such that:
    \begin{align*}
        \forall \pose\in\poseUncertain\colon &\forall (q_x,q_y) \in [w]\times[h]\colon\iftwocol \\\TWOCOLAMP \else \fi \qquad \indicator{\image_{\pose(q_x,q_y)} = 1}\ \implies\  \indicator{\imageOuterApprox_{\poseUncertain(q_x,q_y)}=1},
    \end{align*}
    where $\image_{\pose}\in\B^{w\times h}$ is the image produced by a concrete pose $\pose$ using \cref{def:camera}.
    The computational complexity is $\mathcal{O}(\numPGs wh)$.
\end{restatable}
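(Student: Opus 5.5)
The plan is to trace a fixed concrete pose $\pose\in\poseUncertain$ through \cref{alg:pose-to-image}, line by line, alongside the concrete pipeline of \cref{def:camera}. At every stage we show that the concrete quantity lies in the corresponding set-based quantity. In other words, we prove the invariant that each set computed by the algorithm contains the concrete object produced by $\pose$.

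\textbf{Analog part (lines~\ref{alg-line:analog-start}--\ref{alg-line:analog-end}).}
By \cref{prop:image-enclosure}, $\sin(\angleX)$, $\sin(\angleY)$, $\sin(\angleZ)$ lie in $\contSet{S}$, and the analogous statement holds for $\cos$ and $\contSet{C}$.

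The next step is more delicate. We need the concrete matrices $R_x(\angleX)$, $R_y(\angleY)$, $R_z(\angleZ)$ to lie in $\contSet{R}_x,\contSet{R}_y,\contSet{R}_z$ \emph{simultaneously}, that is, for one common choice of the factors $\alpha$. I would argue this from the shared identifiers of the polynomial zonotopes, which keep the dependencies consistent. Even without that argument, the set-wise inclusions $R_x(\angleX)\in\contSet{R}_x$ and so on suffice, because \cref{eq:PZ-mat-mul} is defined over all pairs of elements.

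Given these inclusions, \cref{eq:PZ-mat-mul} yields $R(\angleX,\angleY,\angleZ)\in\contSet{R}$. Applying \cref{eq:PZ-mat-mul} and \cref{eq:PZ-mink-sum} once more gives
\[
K_{\focalLen,w,h}\bigl(R V_i^\text{TCF}+T\bigr)\in\V_i^\text{CCF},
\]
since $T(x,y,z)\in\poseUncertain_{(1:3)}$. Restricting to columns gives $V^\text{CCF}_{i(\cdot,k)}\in\V^\text{CCF}_{i(\cdot,k)}$ for each vertex $k$.

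\textbf{Digital part.}
For each vertex $k$, \cref{prop:image-enclosure} applied to $x\mapsto 1/x$ gives $1/V^\text{CCF}_{i(3,k)}$ inside the enclosure. A further multiplication via \cref{eq:PZ-mat-mul} then gives $V^\text{PCF}_{i(\cdot,k)}\in\V^\text{PCF}_{i(\cdot,k)}$.

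Because the convex hull is monotone under inclusion,
\[
\opConvHull{V_i^\text{PCF}}\subseteq\opConvHull{\V^\text{PCF}_{i(\cdot,1)},\ldots}\subseteq\widehat{\PG}_i^\text{PCF}.
\]
Monotonicity holds since each $\V^\text{PCF}_{i(\cdot,k)}$ contains the concrete vertex, so any convex combination of concrete vertices is a convex combination of points of the sets. The second inclusion is \cref{eq:convHull}.

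Hence, if $\image_{i(q_x,q_y)}=1$, the pixel lies in $\opConvHull{V_i^\text{PCF}}$ and therefore in $\widehat{\PG}_i^\text{PCF}$. Line~\ref{alg-line:pixel-inclusion} then sets $\imageOuterApprox_{i(q_x,q_y)}=1$.

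Finally, $\image_{\pose(q_x,q_y)}=1$ means some $\image_{i(q_x,q_y)}=1$. It follows that $\imageOuterApprox_{i(q_x,q_y)}=1$, so $\min\{1,\sum_i\imageOuterApprox_i\}=1$ at that pixel, which is the claimed implication.

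\textbf{Main obstacle.}
The main difficulty is the division step. The enclosure of $x\mapsto 1/x$ is only sound if $\V^\text{CCF}_{i(3,\cdot)}$ excludes $0$. I would state this as an implicit assumption, namely that the target lies in front of the camera for all poses in $\poseUncertain$. Alternatively, one can note that the algorithm fails or returns the full image otherwise, which is trivially sound.

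A second subtlety is dependency tracking in \cref{eq:PZ-mat-mul}. I would handle it by observing that the set-wise product contains every concrete product, as noted above.

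\textbf{Complexity.}
The cost is dominated by the pixel loop: $\numPGs$ polygons times $wh$ containment checks. Each check costs a constant number of operations once the number of hull edges (fixed by the vertex count and the order of the enclosures) is treated as a constant. All other steps are independent of $w$ and $h$ and linear in $\numPGs$, which gives $\mathcal{O}(\numPGs wh)$.
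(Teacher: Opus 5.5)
Your overall route is the paper's. Soundness comes from composing outer-approximative set operations, then from monotonicity of the convex hull and of the pixel test, and the cost is dominated by the $\rho wh$ pixel loop. The step that does not hold up is how you finally resolve the dependency issue in your ``main obstacle'' paragraph.

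``The set-wise product contains every concrete product'' is true only if $\boxdot$ and $\oplus$ are the all-pairs operations suggested by the set-builder formulas in \cref{eq:PZ-mink-sum} and \cref{eq:PZ-mat-mul}. (The paper's one-line proof does not address this point either.) In \cref{alg:pose-to-image}, however, the operands carry a common identifier. That is exactly what produces the shared latent space in \cref{eq:hypercube-vertex}, on which \cref{alg:image-to-pose} relies. Broadcasting over generators with a common identifier computes $\{M_3(\alpha)M_4(\alpha)\mid\alpha\in[-\mathbf{1},\mathbf{1}]\}$, and likewise $\{M_1(\alpha)+M_2(\alpha)\}$ for sums. These can be strictly smaller than the all-pairs set. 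For example, with $\mathcal{X}=\{\alpha\mid\alpha\in[-1,1]\}$, the dependent product $\mathcal{X}\boxdot\mathcal{X}$ is $\{\alpha^2\}=[0,1]$, which misses $1\cdot(-1)$ although $1,-1\in\mathcal{X}$. So separate memberships do not give membership of the product. This applies to $R_x(\theta_x)\in\mathcal{R}_x$ and $R_y(\theta_y)\in\mathcal{R}_y$, and equally to $V^{\mathrm{CCF}}_{i(1:2,k)}$ and $1/V^{\mathrm{CCF}}_{i(3,k)}$ in their respective sets.

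The fix is the argument you mention first but do not carry out. Fix the factor values $\alpha^\xi$ that realize $\xi$; for a box as in \cref{eq:hypercube-pose}, $\xi=o_c+G_c\alpha^\xi$. Then show inductively that every set computed in \cref{alg:pose-to-image}, evaluated at the dependent factors $\alpha^\xi$ and a suitable choice of its independent factors, equals the corresponding concrete quantity. Dependency-preserving sums and products keep this invariant, because independent generators are never identified across operands, so their factors can be chosen per operand. For the $\sin$, $\cos$ and reciprocal steps you need the enclosure to be sound pointwise, i.e.\ $f(x(\alpha,\beta))=p(x(\alpha,\beta))+d\beta'$ for some $\beta'\in[-1,1]$. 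The regression-plus-independent-error construction in \cref{sec:pZ} provides this, but it does not follow from the bare inclusion $\supseteq f(\mathcal{X})$ stated in \cref{prop:image-enclosure}.

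Two smaller points. First, the nonzero-depth assumption must be placed on the computed enclosure $\mathcal{V}^{\mathrm{CCF}}_{i(3,\cdot)}$, which can contain $0$ even when all true depths are positive. \cref{alg:pose-to-image} also does not ``return the full image'' in that case. Second, the hull has $\gamma=3\nu_i$ halfspaces by construction (\cref{sec:convHull}), not a number tied to the enclosure order. To absorb the pre-hull work into $\mathcal{O}(\rho wh)$ you need that work bounded by a constant, which the paper obtains from the bounded number of generators.
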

\begin{proof}
    The proof can be found in \cref{sec:proofs}. \iftwocol \qedhere \else \hfill \qed \fi
\end{proof}

\subsection{Certified Pose Estimation From Image}
\label{sec:image-to-pose}

Now that we have computed the outer-approximative image $\imageOuterApprox_\poseUncertain$, we detail how to obtain a certified pose estimate given a concrete, possibly noisy, image $\image^*$ as shown in \cref{fig:overview}.
Our approach exploits dependencies between the computed sets, i.e., between the uncertain pose $\poseUncertain\subseteq\poseSpace$ and the vertices $\V_{i(\cdot,k)}^\text{PCF}$ computed via \cref{alg:pose-to-image}.
In particular, these sets have a \emph{shared latent space}, 
enabling us to impose constraints on $\poseUncertain$ from constraints identified on $\V_{i(\cdot,k)}^\text{PCF}$.
This constrained $\poseUncertain$ will then be our certified pose estimate $\poseUncertain^*\subseteq\poseSpace$ ensuring that the (unobservable) true pose $\pose^*\in\poseSpace$ is contained: $\pose^*\in\poseUncertain^*$ (\cref{sec:problem-statement}).
Subsequently, we detail the individual steps of our approach performed in \cref{alg:image-to-pose}:
1) the selection of candidate poses $\poseUncertain_c$,
and 2) the refinement of the candidate poses by imposing constraints on the shared latent space.

\begin{algorithm}
    \caption{Obtaining Certified Pose Estimates From Image}
    \label{alg:image-to-pose}
    \newcommand{\IfInline}[2]{\textbf{if} #1 \textbf{then} #2}
    \newcommand{\algorithmiccontinue}{\textbf{continue}}
    \begin{algorithmic}[1]
        \Require {Image $\image^*\in\B^{w\times h}$, target $\target\subset\R^3$, camera $\camera$}
        \State \texttt{// 1) \iftwocol Filter \else Gather and filter \fi candidate poses.}
        \State Gather candidate poses $\poseUncertain_1,\ldots,\poseUncertain_\numPoseCandidates\subseteq\poseSpace$ \Comment{(pre-computed)}
        \For{$c\in[\numPoseCandidates]$}
            \State Gather $\imageOuterApprox_{\poseUncertain_c},\V_{c,1}^\text{PCF},\ldots,\V_{c,\numPGs}^\text{PCF}$ \iftwocol \else for $\poseUncertain_c$ \fi \Comment{\cref{alg:pose-to-image} (pre-computed)}
            \If{$\image^*\not\in\imageOuterApprox_{\poseUncertain_c}$}
                \State $C_c\gets \mathbf{0}$ ; $d_c\gets-1$ ; \algorithmiccontinue \label{alg-line:quick-exit} \Comment{Quick exit (\cref{eq:image-inclusion-check})}
            \EndIf
            \State \texttt{// 2) Refine candidate poses.}
            \For{$i\in[\numPGs]$}
                \For{$k\in[\numPGvertices_i]$}
                    \State Compute \iftwocol \else witness pixels \fi $\Q_{c,i,k}\subseteq[w]\times[h]$ for $\V_{c,i(\cdot,k)}^\text{PCF}$ \Comment{\cref{eq:vertex-certified-estimate}}
                    \State Compute $C_{c,i,k},d_{c,i,k}$ \iftwocol \else using $\V_{c,i(\cdot,k)}^\text{PCF}$, $\Q_{c,i,k}$ \fi
                    \label{alg-line:constraints-via-crit-pixel}\Comment{\cref{eq:vertex-certified-estimate}, \cref{prop:enclosing-unsafe-inputs}}
                \EndFor
                \State \iftwocol \else Collect constraints \fi $C_{c,i} = \cmatrix{C_{c,i,1}^\transpose & \ldots & C_{c,i,\numPGvertices_i}^\transpose}^\transpose$ \iftwocol
                \State \else ; \fi $d_{c,i} = \cmatrix{d_{c,i,1}^\transpose & \ldots & d_{c,i,\numPGvertices_i}^\transpose}^\transpose$ \label{alg-line:collect-1}
            \EndFor
            \State \iftwocol \else Collect constraints \fi $C_{c} = \cmatrix{C_{c,1}^\transpose & \ldots & C_{c,\numPGs}^\transpose}^\transpose$ ; $d_{c} = \cmatrix{d_{c,1}^\transpose & \ldots & d_{c,\numPGs}^\transpose}^\transpose$ \label{alg-line:collect-2}
        \EndFor
        \State \Return \iftwocol \else Certified pose estimate \fi $\poseUncertain^* \gets \cZ{\poseUncertain_1}{C_1}{d_1}\cup\, \cdots\, \cup\,\cZ{\poseUncertain_\numPoseCandidates}{C_\numPoseCandidates}{d_\numPoseCandidates}\subseteq\poseSpace$ \Comment{\cref{eq:certified-pose-estimate}}
    \end{algorithmic}
\end{algorithm}

\subsubsection{\iftwocol\else1) \fi Selection of candidate poses\iftwocol\else.\fi}
We can pre-compute a set of (axis-aligned) boxes $\poseUncertain_1,\ldots,\poseUncertain_\numPoseCandidates$
partitioning the pose space $\poseSpace\subset\R^6$ offline:
\begin{equation}
    \label{eq:pose-space-partitioning}
    \poseSpace = \bigcup_{c\in[\numPoseCandidates]} \poseUncertain_c,
\end{equation}
and compute the corresponding $\imageOuterApprox_{\poseUncertain_c}\in\B^{w\times h}$, $\V_{c,1}^\text{PCF},\ldots,\V_{c,\numPGs}^\text{PCF}\subset\R^2$ for each candidate pose $\poseUncertain_c$, $c\in[\numPoseCandidates]$, using \cref{alg:pose-to-image}.
Then, online, e.g., as a landing plane visually observes the runway (\cref{fig:motivation-planar-target}),
we efficiently filter all irrelevant boxes $\poseUncertain_c$ (\cref{alg:image-to-pose}, line~\ref{alg-line:quick-exit}) given a concrete image $\image^*\in\B^{w\times h}$ by checking:
\begin{align}
    \label{eq:image-inclusion-check}
    \begin{split}
    \image^* \in \imageOuterApprox_{\poseUncertain_c}, \qquad
    \text{i.e., } \TWOCOLAMP  \forall (q_x,q_y) \in [w]\times[h]\colon  \iftwocol \\ \else \fi
        \TWOCOLAMP \indicator{\image^*_{(q_x,q_y)} = 1}\ \implies\  \indicator{\imageOuterApprox_{\poseUncertain_c(q_x,q_y)}=1}.
    \end{split}
\end{align}
For the remaining pose candidates, we analogously check if $\image^*$ is at least partially contained in $\imageOuterApprox_{\poseUncertain_c,i,k}$ for each vertex $k\in[\numPGvertices_i]$, $i\in[\numPGs]$.
Due to the outer-approximative computation of the images, we can guarantee that the true pose $\pose^*$ has to be contained in the remaining pose candidates.
The filtering of candidates has to be implemented efficiently to be computationally feasible online.
Since all involved variables are binary bitmaps, this can be done batch-wise on all $\numPoseCandidates$ candidates with complexity $\mathcal{O}(\numPoseCandidates \numPGs \numPGvertices_i w h)$.

\subsubsection{\iftwocol\else2) \fi Refinement of candidate poses\iftwocol\else.\fi}
We refine our selected candidate poses $\poseUncertain_c$,
by imposing constraints $C_c \leq d_c$ on the latent space shared between $\poseUncertain_c$
and its computed vertices $\V_{c,i(\cdot,k)}^\text{PCF}$.
In particular, let $\poseUncertain_c=\shortPZ{o_c}{G_c}{[\ ]}{I_6} \subseteq \poseSpace$ be an axis-aligned box,
and let $\hypercube_c = [-\mathbf{1},\mathbf{1}] \subset \R^6$ be a hypercube,
then $\poseUncertain_c$ is simply a projection of that hypercube $\hypercube_c$ (compare with \cref{def:pZ-mat}):
\begin{equation}
    \label{eq:hypercube-pose}
    \poseUncertain_c = o_c + G_c\hypercube_c.
\end{equation}
Crucially, as each vertex $\V_{c,i(\cdot,k)}$ is computed through \cref{alg:pose-to-image} from $\poseUncertain_c$,
and, thus, ultimately from $\hypercube_c$ (\cref{eq:hypercube-pose}),
it can be decomposed as follows:

\begin{align}
    \TWOCOLAMP \V_{c,i(\cdot,k)}^\text{PCF} \AMP= \shortPZ{\widetilde{o}_{c,i,k}}{[\widetilde{G}_{c,i,k}\ \widehat{G}_{c,i,k}]}{\widehat{G}_I}{[I_6\ \widehat{E}_{c,i,k}]} \nonumber \\
    &=  \underbrace{\shortPZ{\widetilde{o}_{c,i,k}}{\widetilde{G}_{c,i,k}}{[\ ]}{I_6}}_\text{Linearized term $ \V_{c,i(\cdot,k)}^\text{LIN}$} \oplus \underbrace{\shortPZ{\mathbf{0}}{\widehat{G}_{c,i,k}}{\widehat{G}_I}{\widehat{E}_{c,i,k}}}_\text{Linearization error $\V_{c,i(\cdot,k)}^\text{ERR}$}, \iftwocol  \nonumber \else \label{eq:hypercube-vertex} \fi \\
    \TWOCOLAMP \text{with } \V_{c,i(\cdot,k)}^\text{LIN} \AMP= \widetilde{o}_{c,i,k} + \widetilde{G}_{c,i,k} \hypercube_c. \iftwocol  \label{eq:hypercube-vertex} \else \nonumber \fi
\end{align}
Thus, the linearized term of $\V_{c,i(\cdot,k)}^\text{PCF}$ is a projection of the hypercube $\hypercube_c$ and
$\hypercube_c$ \emph{is} the shared latent space between $\V_{c,i(\cdot,k)}^\text{PCF}$ and $\poseUncertain_c$.
We utilize this to impose constraints from the output space on the uncertain input to enclose a preimage of the function computed by \cref{alg:pose-to-image} (mapping from the pose space $\poseSpace$ to the PCF space in $\R^2$):
\begin{restatable}[{Preimage Enclosure~\cite[Prop.~2]{koller2025out}}]{proposition}{propenclosingunsafeinputs}
    \label{prop:enclosing-unsafe-inputs}
    Given a function $f\colon\R^n\rightarrow\R^m$,
    an input set $\contSet{X}=\shortPZ{o_x}{G_x}{[\ ]}{I_n}\subset\R^n$,
    and a polytope $\contSet{U}=\shortPH{A}{b}\subset\R^m$,
    let $\contSet{Y} = \shortPZ{\widetilde{o}_y}{[\widetilde{G}_y\ \widehat{G}_y]}{G_I}{[I_n\ \widehat{E}_y]} = \opEnclose{f}{\contSet{X}}\subset\R^m$ and let $f_\contSet{X}^{-1}(\contSet{U}) =\{x\in\contSet{X}\ |\ f(x)\in\contSet{U}\}$.
    Then,
    \begin{align*}
        f_\contSet{X}^{-1}(\contSet{U}) \subseteq \cZ{\contSet{X}}{C}{d} = \left\{ o_x + \sum_{i=1}^n \alpha_{(i)} G_{x(\cdot, i)}\ \iftwocol \right| \\ \left. \else \middle|\ \fi \alpha\in\hypercube=[-\mathbf{1},\mathbf{1}],\, C\hypercube\leq d  \right\},
    \end{align*}
    \begin{align*}
        \text{where} & \qquad C = A\widetilde{G}_y, \qquad d = b - A\widetilde{o}_y + |A \widehat{G}_y|\mathbf{1}.
    \end{align*}
    The preimage enclosure $\cZ{\contSet{X}}{C}{d}$ is effectively a constrained zonotope~\cite[Def.~3]{scott2016constrained},
    except that they are defined with equality constraints in the original paper.
\end{restatable}
\begin{proof}
    The proof can be found in \cref{sec:proofs}. \iftwocol \qedhere \else \hfill \qed \fi
\end{proof}

Next, we detail how \cref{prop:enclosing-unsafe-inputs} can be used to impose the constraints and provide a running example in \cref{fig:pose-hypercube}.
Intuitively, the set $\contSet{U}$ must contain the (unobservable) vertex $V_{i(\cdot,k)}^*$ produced by $\pose^*$ such that the preimage enclosure contains the true pose $\pose^*\in\poseSpace$ (\cref{sec:problem-statement}).
We construct $\contSet{U}$ by choosing pixels $\Q_{c,i,k}^*\subseteq[w]\times[h]$ from the given image $\image^*$
such that $V_{i(\cdot,k)}^*$ is contained.
After defining the discretization error $\discError=[-\mathbf{1},\mathbf{1}]/2\subset\R^2$ from the image creation (\cref{eq:camera-digital}), we have that:
\begin{equation}
    \label{eq:vertex-certified-estimate}
    V_{i(\cdot,k)}^* \in \V_{c,i(\cdot,k)}^* = \Q_{c,i,k}^* \oplus \discError \subset\R^2.
\end{equation}
Please note that there is ambiguity in the pixels that could contain $V_{i(\cdot,k)}^*$ due to the surjection during the image creation (\cref{def:camera}).
For the remainder of this section, we add all turned-on pixels in $\V_{c,i(\cdot,k)}$ to $\Q_{c,i,k}^*$,
which trivially fulfills this requirement (\cref{eq:vertex-certified-estimate}) as $V_{i(\cdot,k)}^*\in\V_{c,i(\cdot,k)}$ must hold due to the outer-approximative computation (\cref{prop:algForward}).
As $\Q_{c,i,k}^*$ is constructed from the observed image $\image^*$, we refer to its elements as \emph{witness pixels} (\cref{fig:pose-hypercube}a).
We further discuss the ambiguity and possible removals from the set of witness pixels $\Q^*_{c,i,k}$ in \cref{sec:crit-pixel-ambiguity}.

\begin{figure*}
    \centering
    \includegraphics[alt={Identification of witness pixels on a given image to impose constraints on the shared hypercube}]{./figures/externalize\iftwocol -twocol\else \fi/figures/hypercube/pose-hypercube}
    \caption{
        Certified pose estimation (\cref{alg:image-to-pose}): (a) Given an image $\image^*$ and a (simple) target $\target^\text{TCF}$ ($\numPGs=1$), 
        (b) the witness pixels $\Q_{c,1,k}$ are contained in the uncertain vertices $\V_{c,i(\cdot,k)}^\text{PCF}$ of the (unobservable) target $\target^\text{PCF}$, $k\in[\numPGvertices]$,
        enabling us to impose constraints $C_{c,1,k}\leq d_{c,1,k}$ on a pose candidate $\poseUncertain_c\subseteq\poseSpace$ via the shared latent space to (c) obtain a tight certified estimate of $\pose^*\in\cZ{\poseUncertain_c}{C_c}{d_c}$.
    }
    \label{fig:pose-hypercube}
\end{figure*}

Finally, to apply \cref{prop:enclosing-unsafe-inputs},
we compute $\contSet{U}_{c,i,k}\supseteq\V_{c,i(\cdot,k)}^*$ using \cref{eq:convHull},
and impose the resulting constraints $C_{c,i,k}$, $d_{c,i,k}$ on $\poseUncertain_c$  (\cref{fig:pose-hypercube}b).
Crucially, all vertices $k\in[\numPGvertices_i]$ share the \emph{same} latent space, and $\pose^*$ must be contained in the intersection of all constraints (\cref{fig:pose-hypercube}c).
Thus, for each uncertain pose $\poseUncertain_c$, $c\in[\numPoseCandidates]$,
we collect the constraints of all vertices $C_c$, $d_c$ (\cref{alg:image-to-pose}, lines \ref{alg-line:collect-1} to \ref{alg-line:collect-2}),
and the final certified pose estimate is given by:
\begin{equation}
    \label{eq:certified-pose-estimate}
    \poseUncertain^* = \cZ{\poseUncertain_1}{C_1}{d_1} \cup\ \cdots\ \cup\ \cZ{\poseUncertain_\numPoseCandidates}{C_\numPoseCandidates}{d_\numPoseCandidates}\subseteq\poseSpace.
\end{equation}
This process can again be computed batch-wise to be feasible online.
Please note that in \cref{alg:image-to-pose}, if an irrelevant candidate pose $\poseUncertain_c$ is filtered in phase 1,
we set the constraint $C_c=\mathbf{0}$ and $d_c=-1$ to indicate that $\pose^* \notin \poseUncertain_c$ as the constraint is infeasible.

\begin{restatable}[Image to Pose]{theorem}{thalgBackward}
    \label{th:algBackward}
    Given a target $\target\subset\R^3$ (\cref{def:target}), a camera $\camera$ (\cref{def:camera}), an image $\image^*\in\B^{w\times h}$ produced by an (unobservable) true pose $\pose^*\in\poseSpace$,
    and $\numPoseCandidates$ pose candidates partitioning $\poseSpace$,
    \cref{alg:image-to-pose} computes a certified pose estimate:
    \begin{equation*}
        \poseUncertain^*\subseteq\poseSpace \quad \text{s.t.}\quad \pose^*\in\poseUncertain^*
    \end{equation*}
    with a computational complexity of $\mathcal{O}(\numPoseCandidates\numPGs\numPGvertices_{i} wh)$.
\end{restatable}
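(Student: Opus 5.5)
The plan is to follow the true pose $\pose^*$ through \cref{alg:image-to-pose} and show that it survives every filter and every constraint. Since the candidates partition $\poseSpace$ by \cref{eq:pose-space-partitioning}, there is an index $c^*\in[\numPoseCandidates]$ with $\pose^*\in\poseUncertain_{c^*}$. Because $\poseUncertain^*$ in \cref{eq:certified-pose-estimate} is a union over all candidates, it suffices to show $\pose^*\in\cZ{\poseUncertain_{c^*}}{C_{c^*}}{d_{c^*}}$. The other candidates may only enlarge the union; in particular, the infeasible constraint $C_c=\mathbf{0}$, $d_c=-1$ for discarded candidates cannot remove $\pose^*$.

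First, I will show that $c^*$ is not discarded at line~\ref{alg-line:quick-exit}. Assume the image is noise-free, or that noise only adds pixels in the region handled by the noise-budget assumption. By \cref{prop:algForward} applied to $\poseUncertain_{c^*}\ni\pose^*$, every pixel that is on in $\image^*=\image_{\pose^*}$ is also on in $\imageOuterApprox_{\poseUncertain_{c^*}}$, so the check \cref{eq:image-inclusion-check} holds. The same argument applied per vertex shows that the partial-containment check also passes.

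Second, for each $i\in[\numPGs]$ and $k\in[\numPGvertices_i]$, the true projected vertex satisfies $V^*_{i(\cdot,k)}\in\V^\text{PCF}_{c^*,i(\cdot,k)}$. This holds because \cref{alg:pose-to-image} evaluates \cref{eq:camera-analog} and \cref{eq:camera-digital} with outer-approximating set operations: the Minkowski sum and multiplication are exact, and the enclosures of $\sin$, $\cos$ and $1/x$ come from \cref{prop:image-enclosure}. The witness-pixel construction \cref{eq:vertex-certified-estimate}, together with the convex hull outer approximation \cref{eq:convHull}, then gives a polytope $\contSet{U}_{c^*,i,k}\ni V^*_{i(\cdot,k)}$.

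This step is the delicate one. I must argue that the pixel containing $V^*_{i(\cdot,k)}$ is turned on in $\image^*$ and lies inside $\V_{c^*,i(\cdot,k)}$, so that it really belongs to $\Q^*$. Since the vertex is a point of the polygon, the pixel covering it is on by \cref{def:camera}, and the assumption that noise does not turn off edge pixels keeps it on in the noisy case. Adding the discretization error $\discError$ then covers the rounding.

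Third, let $f$ be the map from $\poseUncertain_{c^*}$ to the $k$-th vertex of polygon $i$ in PCF. We have $\pose^*\in f^{-1}_{\poseUncertain_{c^*}}(\contSet{U}_{c^*,i,k})$, so \cref{prop:enclosing-unsafe-inputs} gives $\pose^*\in\cZ{\poseUncertain_{c^*}}{C_{c^*,i,k}}{d_{c^*,i,k}}$. One point needs care: all these sets must be parametrized by the same latent vector $\alpha\in\hypercube_{c^*}$, with $\pose^*=o_{c^*}+G_{c^*}\alpha^*$, via \cref{eq:hypercube-pose} and \cref{eq:hypercube-vertex}. Hence the single $\alpha^*$ satisfies every block $C_{c^*,i,k}\alpha^*\le d_{c^*,i,k}$. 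Stacking these blocks yields $C_{c^*}\alpha^*\le d_{c^*}$, i.e.\ $\pose^*\in\cZ{\poseUncertain_{c^*}}{C_{c^*}}{d_{c^*}}\subseteq\poseUncertain^*$.

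For the complexity, I count the work per candidate. The filtering is a bitmap comparison costing $\mathcal{O}(wh)$ per polygon and vertex. Computing the witness pixels and the hull $\contSet{U}$ scans at most $wh$ pixels per vertex. Building the constraints is a fixed-dimension matrix product, since the pose dimension $6$ and the image dimension $2$ are constants. Summing over $\numPGs$ polygons, $\numPGvertices_i$ vertices and $\numPoseCandidates$ candidates gives $\mathcal{O}(\numPoseCandidates\numPGs\numPGvertices_i wh)$. I expect the main obstacle to be the second step: rigorously justifying that the witness set $\Q^*$ together with $\discError$ contains the true vertex despite pixel ambiguity and noise.
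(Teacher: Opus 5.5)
Your proposal is correct and follows the paper's proof. Soundness comes from the candidate $\poseUncertain_{c^*}\ni\pose^*$ passing the filter by \cref{prop:algForward} and from \cref{prop:enclosing-unsafe-inputs} never cutting $\pose^*$ from $\poseUncertain_{c^*}$. The complexity comes from an $\mathcal{O}(wh)$ cost per candidate, polygon, and vertex; the paper states that this needs a constant number $\numSFs$ of support-function directions for $\contSet{U}$, which your ``fixed-dimension'' remark assumes without saying so.

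You are more explicit than the paper about the single latent $\alpha^*$ satisfying all stacked blocks and about the true vertex lying in a turned-on witness pixel. The latter, however, rests on the area-intersection pixel convention of \cref{sec:implementation-details} plus the edge assumption, not on the literal point-sampling rule in \cref{eq:camera-digital}.
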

\begin{proof}
    The proof can be found in \cref{sec:proofs}. \iftwocol \qedhere \else \hfill \qed \fi
\end{proof}

\section{Witness Pixels and Where to Find Them}
\label{sec:crit-pixel-ambiguity}

In this section, we consider reducing the set of witness pixels $\Q_{c,i,k}^*\subseteq[w]\times[h]$ (\cref{eq:vertex-certified-estimate}).
Please note that all other components of \cref{alg:image-to-pose} remain identical.
Any removal from $\Q_{c,i,k}^*$ imposes stricter constraints on the pose candidate $\poseUncertain_c$ (\cref{prop:enclosing-unsafe-inputs}),
thereby obtaining a tighter certified pose estimate $\poseUncertain^*$.
We begin by discussing the ambiguity due to the surjection in the image generation (\cref{def:camera}) in more detail.
In particular, there is ambiguity regarding which pixels contain the (unobservable) vertices $V_{i(\cdot,k)}^*$ of the true pose $\pose^*$.
This ambiguity is exemplarily illustrated in \cref{fig:ambiguity_crit_pixels}a-c:
Different poses $\pose^{(a)},\pose^{(b)},\pose^{(c)}\in\poseSpace$ result in different positions of $V_{i(\cdot,k)}^*$ of the transformed target $\target^\text{PCF}$,
but all result in the same image $\image^*$.

\iftwocol \begin{figure*}[!b] \else \begin{figure*}[t] \fi
    \centering
    \includegraphics[alt={Visualization of how different target positions can lead to the same image}]{./figures/externalize\iftwocol -twocol\else \fi/figures/ambiguity_crit_pixels/ambiguity_crit_pixels}
    \caption{
        Ambiguity of an (unobservable) target $\target^\text{PCF}$ given an image $\image^*$ (shown for $\imageNoiseLevel=0$ in zoomed-in view):
        (a-c) Possible orientations of $\target^\text{PCF}$ resulting in the same image $\image^*$.
        (d) Witness pixels must be on the boundary of $\image^*$ given the pre-computed $\V_{c,i(\cdot,k)}^\text{PCF}$.
    }
    \label{fig:ambiguity_crit_pixels}
\end{figure*}

To avoid additional ambiguity, we only apply this tightening on \emph{standalone} vertices $k$,
i.e., vertices for which their uncertain set $\V_{c,i(\cdot,k)}^\text{PCF}$ does not overlap with other vertices of the same polygon or other polygons altogether:
  \begin{align}
      \label{eq:vertex-no-overlap}
      \begin{split}
          \forall k'\in[\numPGvertices_i]\setminus\{k\}\colon\qquad &\V_{c,i(\cdot,k)}^\text{PCF} \cap \V_{c,i(\cdot,k')}^\text{PCF} = \emptyset, \\
          \text{and } \forall i'\in[\numPGs]\setminus\{i\}\colon\qquad &\V_{c,i(\cdot,k)}^\text{PCF} \cap \widehat{\contSet{P}}_{c,i'}^\text{PCF} = \emptyset.
      \end{split}
  \end{align}
  Thus, we can look at each remaining vertex individually and do not consider edge cases involving overlaps with other parts of the target%
  .
Using \cref{eq:vertex-no-overlap} and the convexity of each polygon of $\target^\text{PCF}$ (\cref{def:target}),
we can infer that $V_{i(\cdot,k)}^*$ has to be on the boundary of the restricted region, i.e., the pixels within $\V_{c,i(\cdot,k)}^\text{PCF}$ (\cref{fig:ambiguity_crit_pixels}d);
however, the noise makes this boundary fuzzy:
Please note that in our noise threat model (\cref{def:noisy-image}), we assume that the edges are reliably detected by the event-based camera~\cite{gallego2020event},
thereby guaranteeing that $V_{i(\cdot,k)}^*$ is within a turned-on pixel.
However, there might be additional turned-on pixels outside the target and some pixels inside the target might be turned off (\cref{fig:ambiguity_crit_pixels2}a-b).
Conversely, the latter case helps to impose tighter constraints directly as these pixels are not added to $\Q_{c,i,k}^*$ in the first place.

It is also possible to restrict the ambiguous pixels by reasoning geometrically.
As we optimize for speed in the online setting,
we only apply a basic geometric optimization for a noise budget $\imageNoiseLevel=0$ (\cref{def:noisy-image}),
and leave more complex optimizations for future work.
As all remaining witness pixels $\Q_{c,i,k}^*$ on the boundary are turned on,
there has to exist an edge of $\target^\text{PCF}$ intersecting with these pixels.
In particular, this edge must cross the left- and right-most witness pixels with respect to the direction orthogonal to the boundary.
Thus, any triangle spanned by a witness pixel $q\in\Q_{c,i,k}^*$ and the left- and right-most witness pixel,
expanded to cover the entire pixel (\cref{eq:vertex-certified-estimate}),
has to intersect with all remaining witness pixels $q'\in\Q_{c,i,k}^*$ (\cref{fig:ambiguity_crit_pixels2}c).
Otherwise, $q$ cannot contain the vertex $V_{i(\cdot,k)}^*$ of the true pose $\pose^*$ as $q'$ would not be turned on then.
As a special case of this geometric optimization, if a pixel $q\in\Q_{c,i,k}^*$ only has one turned-on neighboring pixel,
$q$ is the only witness pixel, as such a triangle cannot be spanned using another pixel that intersects with the selected pixel $q$.

\iftwocol \begin{figure*}[t] \else \begin{figure*}[t] \fi
    \centering
    \includegraphics[alt={Example of a noisy image and how geometric reasoning can restrict the number of considered pixels.}]{./figures/externalize\iftwocol -twocol\else \fi/figures/ambiguity_crit_pixels/ambiguity_crit_pixels2}
    \caption{Noise and geometry: (a) Idealized image. (b) Image with noise, where we assume that event-based cameras can reliably detect edges; thus, these are not affected by noise. (c) Triangle construction of the selected witness pixels~({\color{CORAcolorPurple} $\mathbf{\circ}$}), without pixel expansion (\cref{eq:vertex-certified-estimate}).}
    \label{fig:ambiguity_crit_pixels2}
\end{figure*}

Future work could also consider loosening \cref{eq:vertex-no-overlap},
e.g., by applying a similar technique on inner corners of a target,
which are currently ignored due to the second condition of \cref{eq:vertex-no-overlap}.
This could, e.g., be useful on the inner corners of the digit \texttt{0} in \cref{fig:system-model}.
However, we refrain from this optimization in this work as we obtain sufficiently tight results without loosening \cref{eq:vertex-no-overlap} for our use cases.

%% file: sections/evaluation.tex
\section{Experimental Results} \label{sec:evaluation}

We have implemented our approach%
\footnote{Code available at: \url{https://github.com/toladnertum/paper-pose-repeatability}}
using the MATLAB toolbox CORA~\cite{Althoff2015ARCH,Althoff2025manual}, which offers a wide range of set-based computing functionalities, and conducted all experiments on an 11th Gen Intel(R) Core(TM) i7-11800H processor with 64 GB of memory.
Evaluation details and additional experiments are given in \cref{sec:exp-details-and-ablation-studies}. 

\subsection{Experiment on Synthetic Data: Landing Plane}
We simulate a landing plane with the camera facing the runway.
The camera has a resolution of $200\times 200$ and a focal length $f=250$ (\cref{eq:intrinsic-params}).
The pose space $\poseSpace$ is chosen large enough to capture the entire landing maneuver:
\begin{align}
    \begin{split}
        \poseSpace &= [-50,50] \times  [-50,150] \times [50,350] \iftwocol \\\TWOCOLAMP\else \fi \times [0,90] \times [-5,5] \times [-5,5] \subset \R^6,
    \end{split}
\end{align}
where the values are chosen to match a typical landing maneuver,\footnote{Angles are converted to radians in the implementation.} and are restricted to poses where the target is visible in the image.
Examples, along with the resulting certified pose estimates, are visualized in \cref{fig:exp-synthetic}:
We notice that our approach can correctly and tightly identify the pose, particularly when the camera is closer to the target.
Please note that for larger distances, many images indeed become very similar visually (\cref{fig:exp-synthetic}a-b),
and thus the exact pose becomes more ambiguous due to the surjection of the image generation (\cref{def:camera}).

\begin{figure*}
    \centering
    \includegraphics[alt={Full state space with identified certified pose estimates per image (a)-(e). Close targets result in tighter certified pose estimates visualized as smaller sets.}]{./figures/externalize\iftwocol -twocol\else \fi/figures/evaluation/plane_landing/pose_eval_plane_landing_estimates}\\
    \includegraphics[alt={Progressively taken images as the plane gets closer to the runway}]{./figures/externalize\iftwocol -twocol\else \fi/figures/evaluation/plane_landing/pose_eval_plane_landing_images}
    \caption{Landing plane scenario: Images and resulting certified pose estimates $\poseUncertain^*$.}
    \label{fig:exp-synthetic}
\end{figure*}

In our experiment, we randomly sample $100$ poses from $\poseSpace$ and generate images using different targets on a runway (\cref{fig:system-model}).
Then, we recover the poses using our approach.
The results are shown in \cref{tab:exp-synthetic}.
For each sample, the true pose is contained in the obtained certified pose estimate, validating the soundness of our approach.
We note that the initial filtering of the pose candidates (\cref{alg:image-to-pose}, line~\ref{alg-line:quick-exit}) can rule out many pose candidates,
which is only possible due to a tight computation of the offline part (\cref{sec:pose-to-image}),
as a larger linearization error results in a larger area enclosing the target.
The subsequently imposed constraints again substantially reduce the volume of $\poseUncertain^*$ while barely increasing the computation time.

\begin{table*}[]
    \centering
    \caption{
        Comparison of computation time and resulting volume of the certified pose estimate $\poseUncertain^*$ using different targets:
        Initial filtering using the offline pre-computations (baseline, \texttt{Filter}), and our full approach (\texttt{Ours}). \vspace{0.5\baselineskip}
    }
    \begin{tabular}{c C C R@{$\pm$}L R@{$\pm$}L R@{$\pm$}L R@{$\pm$}L R@{$\pm$}L}
        \toprule
        & \multicolumn{1}{c}{} & \multicolumn{3}{c}{\textbf{\#Candidates}} & \multicolumn{4}{c}{\textbf{Online Time [s]}} & \multicolumn{4}{c}{\textbf{Norm. Vol. [\%]}}   \\
        \cmidrule(lr{.75em}){3-5}  \cmidrule(lr{.75em}){6-9}   \cmidrule(lr{.75em}){10-13}
        \textbf{Target} & \multicolumn{1}{c}{\textbf{Offline [h]}} & \numPoseCandidates & \multicolumn{2}{c}{\texttt{Filter}} & \multicolumn{2}{c}{\texttt{Filter}} & \multicolumn{2}{c}{\texttt{Ours}} & \multicolumn{2}{c}{\texttt{Filter}} & \multicolumn{2}{c}{\texttt{Ours}}   \\
        \midrule
        \texttt{30}      & 1.86 & 5979 & 41.6 & 9.3 & 0.23 & 0.08 & 2.81 & 1.02 & 1.29 & 0.33 & \mathbf{0.16} & \mathbf{0.41} \\
        \texttt{R}       & 0.53 & 3966 & 28.5 & 6.9 & 0.12 & 0.04 & 2.10 & 0.82 & 4.28 & 1.24 & \mathbf{0.30} & \mathbf{0.97} \\
        \texttt{Stripes} & 0.17 & 1405 & 30.8 & 7.2 & 0.05 & 0.02 & 2.10 & 0.78 & 4.82 & 1.24 & \mathbf{0.64} & \mathbf{1.71} \\
        \bottomrule
    \end{tabular}
    \label{tab:exp-synthetic}
\end{table*}

\subsection{Experiment on Real-World Data: Sign Detection}
We additionally demonstrate our approach on a real-world dataset consisting of $138$ images recorded by a SilkyEvCam event-based camera with calibrated true poses $\pose^*$ using a Vicon motion capture system.
The camera has an image resolution of $640 \times 480$ and focal length $f=533.33$.
In this experiment, we consider the task of detecting a slow-moving vehicle sign in front of the camera,
with a large enough pose space $\poseSpace\subset \R^6$ for the considered setting:
\begin{align}
    \begin{split}
        \poseSpace &= [-0.5,0.5] \times  [-0.5,0.5] \times [0.5,1.5] \iftwocol \\\TWOCOLAMP\else \fi \times [-5,5] \times [-5,5] \times [-15,15].
    \end{split}
\end{align}

\begin{figure*}
    \centering
    \begin{minipage}{0.38\linewidth}
        \centering
        \includegraphics[alt={(a) Raw traffic sign image with a lot of noise. (b) Cleaned image has significantly less noise.}]{./figures/externalize\iftwocol -twocol\else \fi/figures/evaluation/real/pose_real_images}
        \captionof{figure}{Example image of recorded dataset: (a) raw image (\texttt{Raw}), and (b) with basic denoising applied (\texttt{Cleaned}).}
        \label{fig:exp-real}
    \end{minipage} \hspace{0.4cm}
    \begin{minipage}{0.5\linewidth}
        \centering
        \vspace{-0.5\baselineskip}
        \captionof{table}{Results using our approach (\texttt{Ours}) on both methods shown in \cref{fig:exp-real}.}
        \label{tab:exp-real}
        \ \\
        \begin{tabular}{l R@{$\pm$}L R@{$\pm$}L R@{$\pm$}L}
            \toprule
            \textbf{Method} & \multicolumn{2}{c}{\textbf{\#Cand.}} & \multicolumn{2}{c}{\textbf{Time [s]}} & \multicolumn{2}{c}{\textbf{Vol. [\%]}} \\
            \midrule
            \texttt{Raw}     & 88.7 & 20.2 & 3.39 & 0.80 & 5.54 & 1.84 \\
            \texttt{Cleaned} & 48.6 & 15.2 & 2.14 & 0.69 & 1.17 & 0.50 \\
            \bottomrule
        \end{tabular}
    \end{minipage}
\end{figure*}

We show in \cref{fig:exp-real}a an example image from this dataset.
Please note that the event-based camera reliably detects the edges of the sign,
with additional noise spread across the entire image.
We empirically determined the maximum noise level $\imageNoiseLevel = (w\cdot h)\cdot 1\% = 3072$ pixels,
and compare our approach on the raw images from the camera and on images with basic denoising applied (\cref{fig:exp-real}b).
We initially partition the pose space $\poseSpace$ into $\numPoseCandidates=666$ candidates and pre-compute the respective properties offline.
Subsequently, our approach quickly and accurately localizes online the true pose $\pose^*$ on all images as shown in \cref{tab:exp-real}.
We again note that the true pose was contained in the certified pose estimate for all images in this experiment.
Please note that the cleaned images substantially reduce the online computation time as fewer candidates have to be considered.
Thus, any progress in formal filtering methods, as explored in \cite{santa2023certified}, directly benefits our approach.

\subsection{Standalone Experiment on Sound Image Enclosures}

We want to stress that our sound image enclosure (\cref{sec:pose-to-image}) also has applications beyond pose estimation, such as in abstract rendering~\cite{jiabstract}.
Thus, we provide additional results solely on the image enclosure in this experiment.
We first show some qualitative examples in \cref{fig:pose_quali} depicting the tightness of our approach:
From an uncertain pose $\poseUncertain$ and a target, we compute the resulting transformations into the pixel coordinate frame (PCF) to obtain the enclosure $\V_{i(\cdot,k)}^\text{PCF}$ (\cref{prop:algForward}).
Subsequently, we generate $5$ (extreme) random samples from $\poseUncertain$ and transform them individually (\cref{def:camera}).
Thus, the vertices of the transformed samples in PCF must be contained in the computed enclosures.
The results demonstrate the tightness of our approach in (a) rotational uncertainty, (b) zoom uncertainty (z-axis), and (c) translational uncertainty (x- and y-axis) for different targets,
as the vertices of the samples are spread within the entire respective set $\V_{i(\cdot,k)}^\text{PCF}$.
Please note that areas in the set that do not contain any samples do not necessarily reflect outer approximation.
As a measure of this outer approximation, the average ratio between the interval hull radius of the linearized term and the linearization error in \cref{eq:hypercube-vertex} is $18.6\%$ in the experiments presented above;
details on this measure are provided in \cref{sec:implementation-details}.

\begin{figure*}
    \centering
    \includegraphics[alt={Visualization of effect of different perturbations on the final projection}]{./figures/externalize\iftwocol -twocol\else \fi/figures/evaluation/quali/pose_quali}
    \caption{
        Qualitative examples on the sound image enclosure:
        (a) Angle perturbation on target \texttt{30}.
        (b) Zoom uncertainty ($z$-axis) on target \texttt{R}.
        (c) Translational uncertainty ($x$- and $y$-axis) on target \texttt{Stripes}.
        Target $\target^\text{PCF}$ is shown for the center of the perturbed pose $\poseUncertain$, respectively.
    }
    \label{fig:pose_quali}
\end{figure*}

\iftwocol \subsection{Ablation Studies} \else \subsubsection{Ablation studies.} \fi
Additional experiments are provided in \cref{sec:ablation-studies}.

%% file: sections/related-work.tex
\section{Related Work} \label{sec:related-work}

While several techniques exist to localize an object~\cite{placed2023survey,ebadi2023present,cadena2017past,haralick1989pose},
including approaches based on deep learning~\cite{chen2023deep},
obtaining formal guarantees of the pose solely from an image is largely unexplored:
Only the local robustness of deep-learning-based object detectors~\cite{chowdhury2025robustness} and pose estimators~\cite{luo2025certifying,jiabstract} can be verified.
While formal neural network verification has progressed rapidly in recent years~\cite{kaulen2025vnn},
its focus is limited to verifying local robustness properties~\cite{johnson2025neural}.

The geometric transformations of the camera~(\cref{def:camera}) can be modeled as a neural network~\cite{santa2022nnlander},
such that, when concatenated with a controller, robustness queries can be utilized to verify the control actions of local pose spaces.
Image-invariant regions in the pose space can also be utilized to verify vision-based controllers~\cite{habeeb2023verification}.
We refer interested readers to the survey in \cite{mitra2024formal} for further approaches applying formal verification to vision-based deep learning.
However, the black-box nature of neural networks prevents giving global guarantees of their correctness~\cite{goodfellow2015explaining}.
Modeling the camera as a synthetic neural network also enables formal noise reduction of noisy images during the computation of certified pose estimates~\cite{santa2023certified,cruz2026certified};
however, these approaches do not consider the full six-dimensional pose space $\poseSpace$ we are able to verify in our approach.

Certified pose estimation can also be obtained from partial point clouds~\cite{yang2020teaser,talak2023certifiable},
including the relative change of position between two frames~\cite{garcia2021certifiable}.
Conformal prediction~\cite{angelopoulos2023conformal,yang2023object} can also provide guarantees under the considered dataset.
It is also worth noting that global pose localization can be obtained by using QR codes as landmarks~\cite{zhang2015localization,nazemzadeh2017indoor}, but the precise location of the agent relative to the QR code remains unknown.
Our approach could also be used in set-based observers \cite{chen2015disturbance,alanwar2023distributed,raissi2011interval} to update the current set of states given an image.

%% file: sections/conclusion.tex
\section{Conclusion and Next Steps} \label{sec:conclusion}

We show that certified pose estimates can be obtained solely from an image and a known target geometry.
Our experiments demonstrate the effectiveness of our approach,
thereby reliably localizing an agent relative to the target in just over a second.
This potentially opens the door to real-time capabilities if implemented and optimized in a more efficient programming language than MATLAB.
In addition, GPU acceleration can be applied as all operations are based on simple matrix manipulations in a batch-wise manner.

We acknowledge several limitations of this work, each of which opens up interesting research directions that we leave for future investigation.
Firstly, our approach makes assumptions about the current setting, e.g., that the target is clearly visible in the image.
However, this might not occur in practice, as weather conditions can affect the visual perception, and malicious actors can manipulate the target.
Secondly, noisy images make finding a small subset of witness pixels to refine the offline pre-computed pose candidates more challenging.
While we show that our approach still efficiently localizes the agent even for unprocessed images,
we also show that denoising helps to find a tighter certified pose estimate,
and further research on both the algorithm for finding the witness pixels and the refinement through \cref{prop:enclosing-unsafe-inputs} would benefit our approach.
Thirdly, we only consider a single (target) object present in the image, which might be unrealistic in complex settings.
Please note that if the images of the respective objects are separated and the distances between multiple objects are known,
one can compute an intersection of the obtained cones (\cref{fig:exp-synthetic}) to obtain an even tighter result due to the different angles of the object with respect to the camera.
Despite these limitations, we believe that our work is a significant step toward guaranteeing the safety of autonomous agents in safety-critical settings.

%% file: sections/acknowledgements.tex
\iftwocol \textbf{\ackname.} \else \subsubsection{\ackname} \fi
This research was partially supported by the German Research Foundation (Deutsche Forschungsgemeinschaft, DFG) under grant number AL 1185/33-1, the German Academic Exchange Service (Deutscher Akademischer Austauschdienst, DAAD) under grant number 57751853, NSF awards CNS
\#2504809 and \#2313104 and the UCI ProperAI Institute, an Engineering+Society Institute funded as part of a generous gift from Susan and Henry Samueli.

\iftwocol \textbf{\discintname.} \else \subsubsection{\discintname} \fi
The authors have no competing interests to declare that
are relevant to the content of this article.

\iftwocol \textbf{Data-Availability Statement.} \else \subsubsection{Data-Availability Statement} \fi
All code and data used for the experiments are made publicly available through the referenced GitHub repository and under \url{https://doi.org/10.5281/zenodo.19640859}.
%

%% file: sections/appendix.tex

\iftwocol \else {\noindent \Large \textbf{Appendix}} \fi

\iftwocol
  \let\section\subsection
  \let\subsection\subsubsection
  \let\subsubsection\paragraph
\fi

\section{On Set-Based Computing}

\subsection{Polynomial Zonotopes and Image Enclosures}
\label{sec:pZ}

We construct a running example to illustrate the image enclosure (\cref{prop:image-enclosure}) using polynomial zonotopes (\cref{def:pZ-mat}) in this sub-section.
For simplicity, we only use regular polynomial zonotopes~\cite{kochdumper2020sparse}; the construction with their matrix equivalent is analogous.
Consider an uncertain angle $\theta\in\contSet{O}=[\pi/6,\, \pi/2]\subset\R$.
As a polynomial zonotope, this set is represented by:
\begin{equation}
    \label{eq:example-angles}
    \contSet{O} = \shortPZ{[\sfrac{\pi}{3}]}{[\sfrac{\pi}{6}]}{[\ ]}{[1]}.
\end{equation}
Thus, the offset of the resulting polynomial zonotope is at the center of $\contSet{O}$ with a single generator spanning the radius to cover the entire range.
For multiple angles $[\theta_x\ \theta_y\ \theta_z]\in\contSet{O}'\subset\R^3$, the construction is analogous:
\newcommand{\opCenter}[2][]{\operatortt[#1]{center}{#2}}
\newcommand{\opRadius}[2][]{\operatortt[#1]{radius}{#2}}
\begin{equation}
    \contSet{O}' = \shortPZ{\opCenter{\contSet{O}'}}{\diag{\opRadius{\contSet{O}'}}}{[\ ]}{I_3}.
\end{equation}
The generator matrix is given by a diagonal matrix, as the angles can vary independently of each other.
In our approach, we are required to enclose the output of the $\sin(x)$ and $\cos(x)$ functions given an uncertain input (\cref{alg:pose-to-image}, line~\ref{alg-line:sin-cos}).
We illustrate this process in \cref{fig:pose_pZ_sin_cos}:
To obtain a tight enclosure, we first determine the domain $\contSet{D}$ by computing bounds of the input angles $\contSet{O}$.
Within this domain, we compute an approximation $p(x)$ of $f(x)$ using regression and determine the maximal approximation error $d$ within this domain.
Represented as a polynomial zonotope, these two steps can be computed using \cref{eq:PZ-mat-mul} and \cref{eq:PZ-mink-sum}, respectively.

\begin{figure*}
    \centering
    \includegraphics[alt={Enclosure is given by two parallel lines enclosing the actual function output}]{./figures/externalize\iftwocol -twocol\else \fi/figures/pz_sin_cos/pose_pZ_sin_cos}
    \caption{Example image enclosure of (a) $\sin(x)$ and (b) $\cos(x)$ on a given domain $\contSet{D}$.}
    \label{fig:pose_pZ_sin_cos}
\end{figure*}

For the angles in \cref{eq:example-angles},
the respective approximations and approximation errors are:
\begin{equation}
    \begin{aligned}
        p_{\sin}(x) &= 0.4851 x + 0.2981, & d_{\sin} &= 0.0602, \text{ and} \\
        p_{\cos}(x) &= -0.8402 x + 1.3432, & d_{\cos} &= 0.0374,
    \end{aligned}
\end{equation}
which is also illustrated in \cref{fig:pose_pZ_sin_cos}.
Thus, written as a polynomial zonotope, the enclosure is:
\begin{align}
    \begin{split}
        &\cmatrix{\sin{(\contSet{O})} \\ \cos{(\contSet{O})}} \subseteq \iftwocol \\\TWOCOLAMP \else \fi \shortPZ[.]{\cmatrix{0.8061 \\ 0.4634}}{\cmatrix{0.2540 \\ -0.4399}}{\cmatrix{0.0602 & 0 \\ 0 & 0.0374}}{[1]} \hspace{-0.1\linewidth}
    \end{split}
\end{align}
Please note that the dependent generator matrix (second entry, \cref{def:pZ-mat}) is not a diagonal matrix,
as it is computed from the same uncertain angles $\contSet{O}$ and thus encodes dependence (i.e., it is projected from the same hypercube).
Propagating such dependencies through \cref{alg:pose-to-image} enables us to exploit them in \cref{alg:image-to-pose} by imposing constraints on the shared hypercube.

\subsection{Uncertain Target Geometry Through Reference Points}
\label{sec:uncertain-target-geometry}

Please note that the transformation from TCF to CCF is linear in the given vertices $V_i^\text{TCF}$ (\cref{eq:camera-analog}).
For this reason, we can compute the transformation by computing \cref{eq:camera-analog}, independently of the number of polygons and their vertices, using only three non-collinear reference points $\refPoint^\text{TCF}\in\R^{3\times 3}$,
as any point $p\in\R^3$ can be reconstructed from $\refPoint^\text{TCF}$ as follows:
\begin{equation}
    \label{eq:reference-points-parametrization}
    p = \lambda_{(1)} \refPoint_{(:,1)}^\text{TCF} + \lambda_{(2)} \refPoint_{(:,2)}^\text{TCF} + \lambda_{(3)} \refPoint_{(:,3)}^\text{TCF},
\end{equation}
where $\lambda\in\R^3$ is found by solving the following linear system of equations:
\begin{equation}
    \refPoint^\text{TCF}\lambda = p.
\end{equation}
Given uncertainty, this reformulation and reconstruction can directly be computed using \cref{eq:PZ-mat-mul} and \cref{eq:PZ-mink-sum}.
Moreover, it naturally enables us to introduce uncertainty in the shape of the target $\target$ by adding uncertainty to $\lambda$.
Let $\refPoint=I_3$, any uncertainty added to $\lambda$ directly results in uncertainty in the three axis directions.
For planar targets (\cref{fig:motivation-planar-target}), i.e., where all polygons are oriented on the hyperplane with $c=\cmatrix{0&0&1}^\transpose$, $d=0$ (\cref{def:polygon}),
we can choose $\refPoint=[[I_2\ \mathbf{0}]^\transpose\ \mathbf{0}]\in\R^{3\times 3}$,
which effectively eliminates $\lambda_{(3)}$ as $\refPoint_{(\cdot,3)}=\mathbf{0}$.

\subsection{Convex Hulls of Uncertain Vertices}
\label{sec:convHull}

To obtain a tight convex hull over all $\V_{i(\cdot,k)}^\text{PCF}$ to compute pixel containment in the entire polygon,
we deploy an algorithm based on support functions summarized in \cref{alg:supportFunc}.

\begin{algorithm}[b!]
    \caption{Convex Hull Using Support Functions}
    \label{alg:supportFunc}
    \begin{algorithmic}[1]
        \Require {Uncertain vertices $\V_{i}^\text{PCF}$}
        \State Gather directions $C_i\in\R^{\numSFs\times 2}$, init. $d\gets\mathbf{0}\in\R^\numSFs$ \label{alg-line:supportFunc-start} \Comment{Support function enclosure}
        \For{$j\in[\numSFs]$}
            \Comment{(\cref{fig:supportFunc})}
            \State $d_{i(j)} \gets \max_{k\in[\numPGvertices_i]} \opSupportFunc[inline]{\V_{i(\cdot,k)}^\text{PCF}}{C_{i(j,\cdot)}}$  \Comment{\cref{eq:supportFunc}}
        \EndFor
        \State \Return $\widehat{\PG}_i^\text{PCF} \gets \shortPH{C_i}{d_i} \subset\R^2$ \label{alg-line:supportFunc-end} \Comment{$\widehat{\PG}_i^\text{PCF} \supseteq \opConvHull{\V_i^\text{PCF}}$}
    \end{algorithmic}
\end{algorithm}

Let us first recall the definition of a support function~\cite[Def.~1]{girard2008efficient}:
Given a set $\contSet{S}\subset\R^n$ and a direction $a\in\R^n$,
\begin{align}
    \label{eq:supportFunc}
    \begin{split}
        \opSupportFunc{\contSet{S}}{a} = \min &\quad b\in\R, \\
        \text{s.t.} &\quad\forall s\in\contSet{S}\colon a^\transpose s \leq b.
    \end{split}
\end{align}
Thus, $\contSet{H}=\shortPH{a^\transpose}{b}$ forms a halfspace constraint such that $\contSet{S}\subseteq \contSet{H}$.
For multiple constraints, i.e., $A\in\R^{\numSFs\times n}$ with $b\in\R^\numSFs$ computed for each constraint individually,
$\contSet{P} = \shortPH{A}{b}$ forms a polytope such that $\contSet{S}\subseteq\contSet{P}$.
If $\contSet{S}$ is represented as a polynomial zonotope, the inequality in \cref{eq:supportFunc} is often not strict~\cite[Prop.~7]{kochdumper2020sparse}.

\begin{figure*}
    \centering
    \includegraphics[alt={As in Fig. 4, but with explicit construction of the convex hull in (b) through the intersection of halfspaces}]{./figures/externalize\iftwocol -twocol\else \fi/figures/supportFunc/pose-supportFunc}
    \caption{
        Example of the support function enclosure for a target with a single polygon ($\numPGs=1$) with uncertain vertices $\V_1^\text{PCF}$ via \cref{alg:pose-to-image},
        for a pose $\poseUncertain$ with perturbed angles $\angleX,\angleY,\angleZ$ by $10$ degrees.
    }
    \label{fig:supportFunc}
\end{figure*}

In our case, for each polygon $i\in[\numPGs]$,
\begin{equation}
    \contSet{S}_i=\bigcup_{k\in[\numPGvertices_i]} \V_{i(\cdot,k)}^\text{PCF}
\end{equation}
to obtain an enclosure of the respective polygon $\widehat{\PG}_i^\text{PCF}\subset\R^2$.
By construction, any $A_i\in\R^{\numSFs\times 2}$ yields a sound enclosure;
however, we deploy the following heuristics to obtain tight results:
(i) To preserve the overall shape, we choose directions orthogonal to the edges of the polygon,
which are computed via the offset of two subsequent uncertain vertices $\V_{i(\cdot,k)}^\text{PCF}$, $\V_{i(\cdot,k+1)}^\text{PCF}$ (\cref{def:pZ-mat}).
(ii) To sharpen the corners, we add the direction from the center of the polygon (computed by the mean of the vertex offsets) to each vertex offset.
An example with these directions is shown in \cref{fig:supportFunc}b.
Please note that if two subsequent uncertain vertices are influenced by $\poseUncertain$ to a different degree,
the direction orthogonal to the edge (heuristic (i)) can be very conservative.
For example, compare the bottom two vertices with the top two vertices in \cref{fig:supportFunc}b,
and notice the gap between the top right vertex and the halfspace boundary to its right.
This can be iteratively refined by selecting the points $v_{i,k}\in\V_{i(\cdot,k)}^\text{PCF}$, $v_{i,k+1}\in\V_{i(\cdot,k+1)}^\text{PCF}$ for which the support function is realized (\cref{eq:supportFunc}), respectively,
and then (iii) additionally computing the support function in the direction orthogonal to the line connecting $v_{i,k}$ and $v_{i,k+1}$.

\paragraph{On convex hulls of polynomial zonotopes.}
Interested readers might wonder why the convex hull is computed via support function enclosures and not -- as all other operations of our approach -- with polynomial zonotopes using~\cite[Prop.~14]{kochdumper2020sparse}.
We identify two main reasons against this approach:
(i) The computation of the convex hull as polynomial zonotopes comes with additional complexity in the set representation, such that pixel inclusion checks become difficult to compute.
(ii) Witness pixels are computed per vertex, so there is no additional benefit of carrying the dependencies into the convex hull such that they can be exploited via the shared hypercube.

\iftwocol \begin{figure} \else \begin{wrapfigure}{r}{0.28\textwidth} \fi
    \centering
    \iftwocol\else\vspace{-2\baselineskip}\fi
    \includegraphics[alt={Convex hull of two polynomial zonotopes}]{./figures/externalize\iftwocol -twocol\else \fi/figures/supportFunc/pZ_convHull}
    \iftwocol\else\vspace{-\baselineskip}\fi
    \caption{Convex hull of two polynomial zonotopes.}
    \label{fig:pZ_convHull}
    \iftwocol\else\vspace{-3\baselineskip}\fi
\iftwocol \end{figure} \else \end{wrapfigure} \fi

To demonstrate the former, let us consider two simple polynomial zonotopes:
\begin{align}
    \begin{split}
        \PZ_1 &= \shortPZ[,]{\cmatrix{-1 \\ -1}}{\sfrac{1}{2}  \cmatrix{1\ & 0 \\ 0 & 1}}{\cmatrix{\ \\\ }}{\cmatrix{1\ & 0 \\ 0 & 1}} \\
        \PZ_2 &= \shortPZ[,]{\cmatrix{1 \\ 1}}{\sfrac{1}{2}\cmatrix{1\ & 1 \\ 1 & -1}}{\cmatrix{\ \\\ }}{\cmatrix{1\ & 0 \\ 0 & 1}}
    \end{split}
\end{align}
where $\PZ_1$ represents a box centered at $[-1\ -1]^\transpose$ and $\PZ_2$ represents a diamond shape centered at $[1\ 1]^\transpose$ (\cref{fig:pZ_convHull}).
Unfortunately, $\widehat{\PZ}=\opConvHull{\PZ_1,\PZ_2}$ using~\cite[Prop.~14]{kochdumper2020sparse} results in a large outer approximation,
and $\widehat{\PZ}$ has $33$ nonlinear monomials, which makes the pixel inclusion check discussed earlier difficult to compute.

\section{Proofs}\label{sec:proofs}

We provide all missing proofs in this section in the order they appear in the main body of the paper. We also restate all statements for convenience.


\renewcommand{\thetheorem}{\ref{prop:algForward}}
\propalgForward*
\begin{proof}
    The implication follows as each applied operation to compute $\image$ is outer-approximative.
    In particular, to compute $\widehat{\contSet{P}}_i^\text{PCF}$, the operations defined in \cref{prop:image-enclosure}, \cref{eq:PZ-mat-mul} and \cref{eq:PZ-mink-sum} are used,
    and the support function enclosure (\cref{eq:supportFunc}) is computed for polynomial zonotopes using \cite[Prop.~7]{kochdumper2020sparse}.

    Please note that the number of generators to represent $\contSet{V}_{i(\cdot,k)}^\text{PCF}$ as a polynomial zonotope is constant, as we initially always have $6$ generators representing the uncertainty in $\poseSpace\subset\R^6$, and the number of operations to obtain each $\contSet{V}_{i(\cdot,k)}^\text{PCF}$ is constant, each only adding a fixed number of generators:
    Given a polynomial zonotope $\PZ_1\subset\R^n$, \cref{prop:image-enclosure} adds at most $n$ generators to $\PZ$. Let $\PZ_1$ have $h_1$ generators and a second $\PZ_2$ have $h_2$ generators with proper dimensions, applying \cref{eq:PZ-mat-mul} results in $\mathcal{O}(h_1h_2)$ generators, and applying \cref{eq:PZ-mink-sum} results in $h_1{+}h_2$ generators.
    Thus, in \cref{alg:pose-to-image}, all computations leading up to the convex hull (line \ref{alg-line:convHull}) are negligible in the computational complexity analysis.
    As we compute this convex hull using support functions in $\numSFs=3\numPGvertices_i$ directions (\cref{sec:convHull}),
    and as the number of polygon vertices $\numPGvertices_i \ll w\cdot h$, the overall complexity is dominated by the pixel inclusion check in line \ref{alg-line:pixel-inclusion} and is thus given by $\mathcal{O}(\numPGs wh)$.
    Please note that both the inner \textbf{for} loop in \cref{alg:pose-to-image} and \cref{alg:supportFunc} can be computed efficiently in a batch-wise manner.
     \iftwocol \qedhere \else \hfill \qed \fi
\end{proof}


\renewcommand{\thetheorem}{\ref{prop:enclosing-unsafe-inputs}}
\propenclosingunsafeinputs*
\begin{proof}
    While the original statement in \cite[Prop. 2]{koller2025out} is provided for zonotopes~\cite[Def.~2]{kochdumper2020sparse},
    we have polynomial zonotopes as inputs.
    However, the approach is still applicable as 
    the input set $\contSet{X}$ is effectively a zonotope as it has the identity as exponent matrix~\cite[Def.~2]{kochdumper2020sparse},
    we take the linearized (zonotopic) part $\widetilde{G}_y$ to compute $C$,
    and the computation of $d$ is outer-approximative via a zonotope enclosure~\cite[Prop.~5]{kochdumper2020sparse}.
     \iftwocol \qedhere \else \hfill \qed \fi
\end{proof}


\renewcommand{\thetheorem}{\ref{th:algBackward}}
\thalgBackward*
\begin{proof}
    The soundness of our approach follows from the outer-approximative computation in \cref{alg:pose-to-image} (\cref{prop:algForward}) and the application of \cref{prop:enclosing-unsafe-inputs}, which ensures that if $\pose^*\in\poseUncertain_c$, the constraints $C_c\leq d_c$ only restrict $\poseUncertain_c$ such that $\pose^*\in\cZ{\poseUncertain_c}{C_c}{d_c}$.

    Given the pre-computation of all candidate poses using \cref{alg:pose-to-image}, the remaining work for each candidate, polygon, and vertex, is finding the witness pixels, which can be done in $\mathcal{O}(wh)$ using the pre-computed bitmap, and applying \cref{prop:enclosing-unsafe-inputs} given by $\mathcal{O}(\numSFs wh)$.
    As $\numSFs$ is a small constant (\cref{sec:convHull}),
    the overall computational complexity is $\mathcal{O}(\numPoseCandidates\numPGs \numPGvertices_{i} wh)$.
     \iftwocol \qedhere \else \hfill \qed \fi
\end{proof}


\section{Evaluation Details and Ablation Studies}
\label{sec:exp-details-and-ablation-studies}

\subsection{Evaluation Details}
\label{sec:implementation-details}

We assume that a pixel of an image is turned on if the polygon $\target^\text{PCF}$ intersects with the respective area (\cref{fig:implementation-details}a).
This assumption is reasonable for event-based cameras.
Please note that this intersection has to be computed whenever pixel containment is checked in the presented algorithms.

\begin{figure*}
    \centering
    \includegraphics[alt={(a) Image shows that any intersection of a pixel with the target turns on the pixel. (b) Hierarchical partitioning results in smaller sub-cubes near the camera}]{./figures/externalize\iftwocol -twocol\else \fi/figures/evaluation/details/pose_implementation_details}
    \caption{
        (a) We assume pixels are turned on if the (unobservable) target $\target$ intersects with the respective area of the pixel.
        (b) Pose candidates $\poseUncertain_c$ partitioning pose space $\poseSpace$ in a hierarchical manner.
    }
    \label{fig:implementation-details}
\end{figure*}

We determine the pose space by analyzing the maximum poses at which anything meaningful is visible in the resulting image.
Afterward, the pose candidates are determined by partitioning the pose space $\poseSpace$ in a hierarchical manner (\cref{fig:implementation-details}b):
Given a pose candidate $\poseUncertain$,
we pass $\poseUncertain$ through \cref{alg:pose-to-image},
and analyze the ratio between the radius of the interval hull between the linear approximating generators and the error generators (\cref{eq:hypercube-vertex}).
If this ratio exceeds a certain threshold $\errorRate\in\R_{+}$,
the pose candidate $\poseUncertain$ is split in half in each dimension
-- where we use sensitivity analysis to determine the dimensions to split --
and the process is repeated.
Otherwise, a sufficiently tight pose candidate $\poseUncertain$ is found and added to the list.
The process starts with the entire pose space $\poseUncertain=\poseSpace$.
Usually, finding the pose candidates takes substantially less time than computing the subsequent properties of each pose candidate (e.g., pixel containments),
and both are shown combined as \emph{offline} computation time in this work.
Please note that the target cannot be seen from some poses, and we remove these as well while partitioning $\poseSpace$;
such filtered candidates are thus not visible in \cref{fig:implementation-details}b.
Moreover, we show the geometry of the targets used in our experiments in \cref{fig:pose-targets}.

\begin{figure*}
    \centering
    \includegraphics[alt={Coordinates of various targets in TCF}]{./figures/externalize\iftwocol -twocol\else \fi/figures/evaluation/targets/pose_targets}
    \caption{
        Geometry of used targets:
        (a) \texttt{30},
        (b) \texttt{R},
        (c) \texttt{Stripes}, and
        (d) \texttt{SlowVehicle}.
    }
    \label{fig:pose-targets}
\end{figure*}

\subsection{Ablation Studies}
\label{sec:ablation-studies}
In this sub-section, we provide further insights into the influence of hyperparameters on our approach using ablation studies.

\subsubsection{Varying the image resolution.}

\begin{table*}[]
    \centering
    \caption{
        Ablation study on the image resolution.
    }
    \begin{tabular}{C  C R@{$\pm$}L R@{$\pm$}L R@{$\pm$}L}
        \toprule
        \textbf{Image Resolution} & \textbf{\#Candidates }\numPoseCandidates & \multicolumn{2}{c}{\texttt{Filter [s]}} & \multicolumn{2}{c}{\textbf{Time [s]}}  & \multicolumn{2}{c}{\textbf{\#Witness Pixels}} \\
        \midrule
        100 \times 100 & 286 &    24.5 & 3.5   &  1.47 & 0.46  & \ \ \ \ \ \  3.8 & 1.6  \\
        200 \times 200 & 1405 &  29.5 & 5.7 &    2.17 & 0.50     &  4.2 & 1.4 \\
        300 \times 300 & 1857 &   28.8 & 7.0 &   2.84 & 0.85 &   3.3 & 1.4 \\
        500 \times 500 & 2592 &   30.1 & 6.1 &  4.50 & 1.15 &   2.8 & 1.1  \\
        \bottomrule
    \end{tabular}
    \label{tab:ablation-image-resolution}
\end{table*}

\begin{figure*}
    \centering
    \includegraphics[alt={Continuation of Fig. 4b and 12b, where the figure shows that using progressively more halfspaces results in a tighter enclosure}]{./figures/externalize\iftwocol -twocol\else \fi/figures/evaluation/ablation_sf/pose_ablation_sf}
    \caption{Comparison of the number of support function evaluations $\numSFs$ (showing zoomed-in view of setting in \cref{fig:supportFunc}): (a) only in directions of the edges, (b) + directions to corners, (c-d) + iterative refinement.}
    \label{fig:supportFunc-heuristic-ablation}
\end{figure*}

We summarize in \cref{tab:ablation-image-resolution} the effect of the image resolution on the results.
Interestingly, despite requiring more pose candidates with larger image resolution, after the initial filtering is applied, the average number of remaining candidates (\cref{alg:image-to-pose}, line~\ref{alg-line:quick-exit}) stays roughly constant, and so does the computation time.
We also show the average number of witness pixels that have to be considered with each image resolution, confirming the efficiency of our approach.

Please note that the computational complexities in \cref{prop:algForward} and \cref{th:algBackward} can be improved by cropping a high-resolution image $\image\in\B^{w\times h}$ to the relevant part, and adjusting the application of \cref{prop:enclosing-unsafe-inputs} due to this rescaling (not implemented).
However, we did not implement this optimization to avoid distorting the overall evaluation results.

\subsubsection{Varying the support function heuristics.}
In \cref{alg:pose-to-image}, we discuss a convex hull enclosure by running $\numSFs$ support function evaluations.
The detailed process is described in \cref{sec:convHull}, and is also illustrated in \cref{fig:supportFunc}.

In this ablation study, we provide further insights by presenting qualitative examples of different heuristics for selecting support function directions in \cref{fig:supportFunc-heuristic-ablation}.
In particular, we show the result for the direction orthogonal to the edge connecting the center of the respective vertices (\cref{fig:supportFunc-heuristic-ablation}a), the directions to the vertices themselves (\cref{fig:supportFunc-heuristic-ablation}b), and one additional iterative heuristic (\cref{fig:supportFunc-heuristic-ablation}c-d):
When evaluating the direction orthogonal to an edge, we obtain a point from each vertex where this support is realized (\cref{eq:supportFunc}).
Connecting these two points, and again running a support function evaluation orthogonal to this line, usually provides a better outer approximation of the true shape.
We have found that choosing $\numSFs>3\cdot\numPGvertices$ only has diminishing returns.

%% file: bib.bib
@inproceedings{cruz2026certified,
    author = {Santa Cruz, Ulices and Elfar, Mahmoud and Shoukry, Yasser},
    title = {Correct-by-Construction: {Vision}-based Pose Estimation using Geometric Generative Models},
    booktitle = {arxiv},
    doi = {10.48550/arXiv.2601.17556},
    year = {2026}
}

@inproceedings{kochdumper2020sparse,
    author = {Kochdumper, Niklas and Althoff, Matthias},
    booktitle = {IEEE Transactions on Automatic Control},
    year = {2020},
    title = {Sparse polynomial zonotopes: {A} novel set representation for reachability analysis},
    volume = {66},
    issue = {9},
    pages = {4043-4058},
    doi = {10.1109/TAC.2020.3024348},
}

@inproceedings{kochdumper2022open,
    title={Open- and closed-loop neural network verification using polynomial zonotopes},
    author={Kochdumper, Niklas and Schilling, Christian and Althoff, Matthias and Bak, Stanley},
    booktitle={{NASA} Formal Methods Symposium},
    doi = {10.1007/978-3-031-33170-1\_2},
    volume = {13903},
    pages = {16-36},
    year={2023}
}

@inproceedings{katz2017reluplex,
    author = {Katz, Guy and Barrett, Clark and Dill, David L. and Julian, Kyle and Kochenderfer, Mykel J.},
    booktitle = {International Conference on Computer Aided Verification},
    year = {2017},
    doi = {10.1007/978-3-319-63387-9\_5},
    volume = {10426},
    pages = {97–117},
    title = {Reluplex: An efficient {SMT} solver for verifying deep neural networks}
}

@inproceedings{kaulen2025vnn,
    title={The 6th International Verification of Neural Networks Competition ({VNN-COMP} 2025): {Summary} and Results},
    author={Kaulen, Konstantin and Ladner, Tobias and Bak, Stanley and Brix, Christopher and Duong, Hai and Flinkow, Thomas and Johnson, Taylor T. and Koller, Lukas and Manino, Edoardo and Nguyen, ThanhVu H. and Hoaze, Wu},
    booktitle={arXiv preprint arXiv:2512.19007},
    doi = {10.48550/arXiv.2512.19007},
    year={2025}
}

@inproceedings{ladner2025formal,
    title={Formal Verification of Graph Convolutional Networks with Uncertain Node Features and Uncertain Graph Structure},
    author={Ladner, Tobias and Eichelbeck, Michael and Althoff, Matthias},
    booktitle={Transactions on Machine Learning Research},
    year = {2025},
}

@inproceedings{ladner2023automatic,
    title={Automatic abstraction refinement in neural network verification using sensitivity analysis},
    author={Ladner, Tobias and Althoff, Matthias},
    booktitle={Proceedings of the 26th {ACM} International Conference on Hybrid Systems: Computation and Control},
    doi = {10.1145/3575870.3587129},
    volume = {26},
    pages = {1-13},
    year={2023}
}

@book{de2008computational,
    title={Computational geometry: {Algorithms} and applications},
    author={De Berg, Mark and Cheong, Otfried and Van Kreveld, Marc and Overmars, Mark},
    year={2008},
    publisher={Springer}
}

@inproceedings{girard2008efficient,
    title={Efficient reachability analysis for linear systems using support functions},
    author={Girard, Antoine and Le Guernic, Colas},
    booktitle={IFAC Proceedings Volumes},
    volume = {41},
    issue = {2},
    pages = {8966-8971},
    doi = {10.3182/20080706-5-KR-1001.01514},
    year={2008}
}

@inproceedings{koller2025out,
    title={Out of the Shadows: {Exploring} a Latent Space for Neural Network Verification},
    author={Koller, Lukas and Ladner, Tobias and Althoff, Matthias},
    booktitle={International Conference on Learning Representations},
    year={2026}
}

@inproceedings{scott2016constrained,
    title={Constrained zonotopes: {A} new tool for set-based estimation and fault detection},
    author={Scott, Joseph K. and Raimondo, Davide M. and Marseglia, Giuseppe Roberto and Braatz, Richard D.},
    booktitle={Automatica},
    doi = {10.1016/j.automatica.2016.02.036},
    volume = {69},
    pages = {126-136},
    year={2016}
}

@inproceedings{grigorescu2020survey,
  title={A survey of deep learning techniques for autonomous driving},
  author={Grigorescu, Sorin and Trasnea, Bogdan and Cocias, Tiberiu and Macesanu, Gigel},
  booktitle={Journal of Field Robotics},
  doi = {10.1002/rob.21918},
  volume = {37},
  issue = {3},
  pages = {362-386},
  year={2020}
}

@phdthesis{althoff2010reachability,
  title={Reachability analysis and its application to the safety assessment of autonomous cars},
  author={Althoff, Matthias},
  year={2010},
  school={Technische Universit{\"a}t M{\"u}nchen}
}

@inproceedings{goodfellow2015explaining,
	title	 = {Explaining and harnessing adversarial examples},
	author	 = {Ian Goodfellow and Jonathon Shlens and Christian Szegedy},
	year	 = {2015},
	booktitle	 = {International Conference on Learning Representations}
}

@inproceedings{chen2023deep,
  title={Deep learning for visual localization and mapping: {A} survey},
  author={Chen, Changhao and Wang, Bing and Lu, Chris Xiaoxuan and Trigoni, Niki and Markham, Andrew},
  booktitle={IEEE Transactions on Neural Networks and Learning Systems},
  doi = {10.1109/TNNLS.2023.3309809},
  volume = {35},
  issue = {12},
  pages = {17000-17020},
  year={2023}
}

@inproceedings{pirayesh2022jamming,
  title={Jamming attacks and anti-jamming strategies in wireless networks: {A} comprehensive survey},
  author={Pirayesh, Hossein and Zeng, Huacheng},
  booktitle={IEEE Communications Surveys \& Tutorials},
  doi = {10.1109/COMST.2022.3159185},
  volume = {24},
  issue = {2},
  pages = {767-809},
  year={2022}
}

@inproceedings{wu2024gps,
  title={{GPS} jamming: {A} historical record from global radio occultation ({RO}) observations},
  author={Wu, Dong L. and Csar, Cornelius and Salinas, Jude H.},
  booktitle={International Technical Meeting of the Satellite Division of The Institute of Navigation},
  doi = {10.33012/2024.19816},
  volume = {37},
  pages = {781-795},
  year={2024}
}

@inproceedings{placed2023survey,
  title={A survey on active simultaneous localization and mapping: {State} of the art and new frontiers},
  author={Placed, Julio A and Strader, Jared and Carrillo, Henry and Atanasov, Nikolay and Indelman, Vadim and Carlone, Luca and Castellanos, Jos{\'e} A},
  booktitle={IEEE Transactions on Robotics},
  doi = {10.1109/TRO.2023.3248510},
  volume = {24},
  issue = {2},
  pages = {767-809},
  year={2023}
}

@book{icao2018Aerodrome,
    title = {Annex 14, Volume I, Aerodrome Design and Operations},
    author = {{International Civil Aviation Organization}},
    publisher = {ICAO},
    year = {2018}
}

@misc{runwaywebsite,
    title = {Runway Stripes And Markings, Explained.},
    author = {Boldmethod},
    year = {2025},
    url = {https://www.boldmethod.com/learn-to-fly/regulations/runway-markings-and-spacing/}
}

@inproceedings{Althoff2015ARCH,
	author			= {Matthias Althoff},
	title			= {An Introduction to {CORA} 2015},
	year			= {2015},
	booktitle		= {Proc. of the 1st and 2nd Workshop on Applied Verification for Continuous and Hybrid Systems},
	pages			= {120-151},
    doi		    	= {10.29007/zbkv}
  }

@inproceedings{garcia2015comprehensive,
    title={A comprehensive survey on safe reinforcement learning},
    author={Garc{\i}a, Javier and Fern{\'a}ndez, Fernando},
    booktitle={Journal of Machine Learning Research},
    volume = {16},
    pages = {437-1480},
    year={2015}
}

@inproceedings{clarke2000counterexample,
       title={Counterexample-guided abstraction refinement},
       author={Clarke, Edmund and Grumberg, Orna and Jha, Somesh and Lu, Yuan and Veith, Helmut},
       booktitle={International Conference on Computer Aided Verification},
       doi = {10.1007/10722167\_15},
       volume = {1855},
       pages = {154–169},
       year={2000}
}

@book{hartley2003multiple,
    title={Multiple view geometry in computer vision},
    author={Hartley, Richard and Zisserman, Andrew},
    year={2003},
    volume = {2},
    publisher={Cambridge University Press}
}

@book{goldstein1980classical,
    title={Classical mechanics},
    author={Goldstein, Herbert},
    publisher={Addison Wesley},
    edition = {2},
    year={1980}
}

@inproceedings{ebadi2023present,
  title={Present and future of slam in extreme environments: {The} {DARPA} {SubT} Challenge},
  author={Ebadi, Kamak and Bernreiter, Lukas and Biggie, Harel and Catt, Gavin and Chang, Yun and Chatterjee, Arghya and Denniston, Christopher E and Desch{\^e}nes, Simon-Pierre and Harlow, Kyle and Khattak, Shehryar and others},
  booktitle={IEEE Transactions on Robotics},
  doi = {10.1109/TRO.2023.3323938},
  volume = {40},
  pages = {936-959},
  year={2023}
}

@inproceedings{cadena2017past,
  title={Past, present, and future of simultaneous localization and mapping: {Toward} the robust-perception age},
  author={Cadena, Cesar and Carlone, Luca and Carrillo, Henry and Latif, Yasir and Scaramuzza, Davide and Neira, Jos{\'e} and Reid, Ian and Leonard, John J},
  booktitle={IEEE Transactions on Robotics},
  doi = {10.1109/TRO.2016.2624754},
  volume = {32},
  issue = {6},
  pages = {1309-1332},
  year={2017}
}

@inproceedings{haralick1989pose,
  title={Pose estimation from corresponding point data},
  author={Haralick, Robert M. and Joo, Hyonam and Lee, Chung-Nan and Zhuang, Xinhua and Vaidya, Vinay G and Kim, Man Bae},
  booktitle={IEEE Transactions on Systems, Man, and Cybernetics},
  doi = {10.1109/21.44063},
  volume = {19},
  issue = {6},
  pages = {1426-1446},
  year={1989}
}

@inproceedings{zhang2015localization,
  title={Localization and navigation using {QR} code for mobile robot in indoor environment},
  author={Zhang, Huijuan and Zhang, Chengning and Yang, Wei and Chen, Chin-Yin},
  booktitle={IEEE International Conference on Robotics and Biomimetics},
  doi = {10.1109/ROBIO.2015.7419715},
  pages = {2501-2506},
  year={2015}
}

@inproceedings{nazemzadeh2017indoor,
  title={Indoor localization of mobile robots through {QR} code detection and dead reckoning data fusion},
  author={Nazemzadeh, Payam and Fontanelli, Daniele and Macii, David and Palopoli, Luigi},
  booktitle={IEEE/ASME Transactions on Mechatronics},
  doi = {10.1109/TMECH.2017.2762598},
  volume = {22},
  issue = {6},
  pages = {2588-2599},
  year={2017}
}

@inproceedings{johnson2025neural,
  title={Is Neural Network Verification Useful and What Is Next?},
  author={Johnson, Taylor T.},
  booktitle={Allerton Conference on Communication, Control, and Computing Proceedings},
  volume = {61},
  year={2025}
}

@inproceedings{chowdhury2025robustness,
  title={Robustness Verification for Object Detectors Using Set-Based Reachability Analysis},
  author={Chowdhury, Sayak and Khandelwal, Hardik and D’Souza, Meenakshi},
  booktitle={International Conference on Artificial Neural Networks},
  pages={481--492},
  doi = {10.1007/978-3-032-04546-1\_39},
  year={2025}
}

@inproceedings{luo2025certifying,
  title={Certifying Robustness of Learning-Based Keypoint Detection and Pose Estimation Methods},
  author={Luo, Xusheng and Wei, Tianhao and Liu, Simin and Wang, Ziwei and Mattei-Mendez, Luis and Loper, Taylor and Neighbor, Joshua and Hutchison, Casidhe and Liu, Changliu},
  booktitle={ACM Transactions on Cyber-Physical Systems},
  doi = {10.1145/3728362},
  volume = {9},
  issue = {2},
  pages = {1-26},
  year={2025}
}

@inproceedings{jiabstract,
  title={Abstract Rendering: {Certified} Rendering Under {3D} Semantic Uncertainty},
  author={Ji, Chenxi and Li, Yangge and Zhong, Xiangru and Zhang, Huan and Mitra, Sayan},
  booktitle={Advances in Neural Information Processing Systems},
  volume = {38},
  pages = {105171-105198},
year={2025}
}

@inproceedings{gallego2020event,
  title={Event-based vision: {A} survey},
  author={Gallego, Guillermo and Delbr{\"u}ck, Tobi and Orchard, Garrick and Bartolozzi, Chiara and Taba, Brian and Censi, Andrea and Leutenegger, Stefan and Davison, Andrew J and Conradt, J{\"o}rg and Daniilidis, Kostas and others},
  booktitle={IEEE Transactions on Pattern Analysis and Machine Intelligence},
  doi = {10.1109/TPAMI.2020.3008413},
  volume = {44},
  issue = {1},
  pages = {154-180},
  year={2020}
}

@inproceedings{talak2023certifiable,
    title={Certifiable object pose estimation: {Foundations}, learning models, and self-training},
    author={Talak, Rajat and Peng, Lisa R. and Carlone, Luca},
    booktitle={IEEE Transactions on Robotics},
    doi = {10.1109/TRO.2023.3271568},
    volume = {39},
    issue = {4},
    pages = {2805-2824},
    year={2023}
}

@inproceedings{yang2020teaser,
    title={{TEASER}: {Fast} and certifiable point cloud registration},
    author={Yang, Heng and Shi, Jingnan and Carlone, Luca},
    booktitle={IEEE Transactions on Robotics},
    doi = {10.1109/TRO.2020.3033695},
    volume = {37},
    issue = {2},
    pages = {314-333},
    year={2020}
}

@inproceedings{garcia2021certifiable,
    title={Certifiable relative pose estimation},
    author={Garcia-Salguero, Mercedes and Briales, Jesus and Gonzalez-Jimenez, Javier},
    booktitle={Image and Vision Computing},
    doi = {10.1016/j.imavis.2021.104142},
    volume={109},
    issue = {104142},
    year={2021}
}

@inproceedings{angelopoulos2023conformal,
    title={Conformal prediction: {A} gentle introduction},
    author={Angelopoulos, Anastasios N. and Bates, Stephen},
    booktitle={Foundations and Trends in Machine Learning},
    doi = {10.1561/2200000101},
    volume = {16},
    issue = {4},
    pages = {494–591},
    year={2023}
}

@inproceedings{yang2023object,
    title={Object pose estimation with statistical guarantees: {Conformal} keypoint detection and geometric uncertainty propagation},
    author={Yang, Heng and Pavone, Marco},
    booktitle={Proceedings of the IEEE/CVF conference on computer vision and pattern recognition},
    pages = {8947-8958},
    year={2023}
}

@inproceedings{chen2015disturbance,
    title={Disturbance-observer-based control and related methods -- {An} overview},
    author={Chen, Wen-Hua and Yang, Jun and Guo, Lei and Li, Shihua},
    booktitle={IEEE Transactions on Industrial Electronics},
    doi = {10.1109/TIE.2015.2478397},
    year={2015}
}

@inproceedings{alanwar2023distributed,
    title={Distributed set-based observers using diffusion strategies},
    author={Alanwar, Amr and Rath, Jagat Jyoti and Said, Hazem and Johansson, Karl Henrik and Althoff, Matthias},
    booktitle={Journal of the Franklin Institute},
    doi = {10.1016/j.jfranklin.2023.03.025},
    volume = {360},
    issue = {10},
    pages = {6976--6993},
    year={2023}
}

@inproceedings{raissi2011interval,
    title={Interval state estimation for a class of nonlinear systems},
    author={Ra{\"\i}ssi, Tarek and Efimov, Denis and Zolghadri, Ali},
    booktitle={IEEE Transactions on Automatic Control},
    doi = {10.1109/TAC.2011.2164820},
    volume = {57},
    issue = {1},
    pages = {260-265},
    year={2011}
}

@inproceedings{bronnimann2006space,
    title={Space-efficient algorithms for computing the convex hull of a simple polygonal line in linear time},
    author={Br{\"o}nnimann, Herv{\'e} and Chan, Timothy M.},
    booktitle={Computational Geometry},
    doi = {10.1016/j.comgeo.2005.11.005},
    volume = {34},
    issue = {2},
    pages = {75-82},
    year={2006}
}

@inproceedings{santa2022nnlander,
    title={{NNLander-VeriF}: {A} neural network formal verification framework for vision-based autonomous aircraft landing},
    author={Santa Cruz, Ulices and Shoukry, Yasser},
    booktitle={NASA Formal Methods Symposium},
    doi = {10.1007/978-3-031-06773-0\_11},
    volume = {13260},
    pages = {213–230},
    year={2022}
}

@inproceedings{santa2023certified,
    title={Certified vision-based state estimation for autonomous landing systems using reachability analysis},
    author={Santa Cruz, Ulices and Shoukry, Yasser},
    booktitle={IEEE Conference on Decision and Control},
    doi = {10.1109/CDC49753.2023.10384107},
    volume = {62},
    year={2023}
}

@inproceedings{mitra2024formal,
    title={Formal Verification Techniques for Vision-Based Autonomous Systems -- {A} Survey},
    author={Mitra, Sayan and P{\u{a}}s{\u{a}}reanu, Corina and Prabhakar, Pavithra and Seshia, Sanjit A and Mangal, Ravi and Li, Yangge and Watson, Christopher and Gopinath, Divya and Yu, Huafeng},
    booktitle={Principles of Verification: Cycling the Probabilistic Landscape},
    volume = {15262},
    pages={89--108},
    doi = {10.1007/978-3-031-75778-5\_5},
    year={2024}
}

@inproceedings{habeeb2023verification,
  title={Verification of camera-based autonomous systems},
  author={Habeeb, P. and Deka, Nabarun and D’Souza, Deepak and Lodaya, Kamal and Prabhakar, Pavithra},
  booktitle={IEEE Transactions on Computer-Aided Design of Integrated Circuits and Systems},
  doi = {10.1109/TCAD.2023.3240131},
  volume = {42},
  issue = {10},
  pages = {3450-3463},
  year={2023}
}

@article{Althoff2025manual,
    title			= {{CORA} Manual},
    author			= {Althoff, Matthias and Kochdumper, Niklas and Ladner, Tobias and Perschl, Maximilian and Wetzlinger, Mark},
    journal 		= {Technical University of Munich},
    year			= {2025},
    url			= {https://cora.in.tum.de/manual}
}
